\documentclass{article}

\usepackage{iclr2027_conference,times}
\usepackage[utf8]{inputenc}
\usepackage{amsmath,amsfonts,amssymb,amsthm}
\usepackage{booktabs,multirow,array}
\usepackage{graphicx}
\usepackage{float}
\usepackage{longtable}
\usepackage{textcomp}
\usepackage[hyphens]{url}
\usepackage{hyperref}
\hypersetup{hidelinks}

\newtheorem{theorem}{Theorem}
\newtheorem{lemma}{Lemma}
\newtheorem{assumption}{Assumption}
\newtheorem{corollary}{Corollary}
\newtheorem{proposition}[theorem]{Proposition}
\newcounter{algorithm}
\renewcommand{\thealgorithm}{\arabic{algorithm}}

\title{PAC-CF: Calibrating Irreversible Frontier\\
Pruning in LLM-Guided Search}
\author{%
Tianhao Qian
\And
Jiayu Chen\thanks{Jiayu Chen and Lixu Wang are co-corresponding authors.}
\And
Lixu Wang\footnotemark[1]
}
\iclrfinalcopy

\begin{document}
\maketitle
% The bundled ICLR style labels final-copy mode as published.  This manuscript
% is a public preprint, so use an accurate running header instead.
\lhead{Preprint}

\begin{abstract}
LLM-guided search is usually adopted to solve complex tasks by ranking and pruning
top-$K$ candidates based on evaluator scores. However, irreducible bias still exists
even if popular methods, such as repeated sampling, are applied to reduce variance.
Consequently, pruning may remove every continuation that can reach a valid solution. In this paper,
we propose Probably Approximately Correct Conformal Filtering (PAC-CF), which
formulates tree pruning as a PAC-guaranteed decision problem. Theoretical analysis
establishes how irreducible bias reduces the score separation for certified elimination.
Native-Trace path calibration derives a conformal margin from the score deficit of
verifier-valid continuations on held-out Native traces. During deployment, PAC-CF
uses this calibrated margin in a direct score-gap filtering rule. Across diverse
domains and state-of-the-art controllers, PAC-CF
improves utility at various budgets while reducing all five measured workload metrics.
Especially on pruning-aware ToolTree under a 100-request budget, replacing native
top-$K$ improves equal-domain utility by 4.38 points while reducing physical requests
by $18.95\%$ with a $23.76\%$ token reduction.
\end{abstract}

\section{Introduction}
\label{sec:intro}

Large Language Models(LLMs) increasingly act beyond a single response: they decompose
goals, interleave reasoning with environment actions, and call tools over multiple
steps \citep{wei2022chain,wang2023selfconsistency,react2023,toolformer2023,toolllm2024}.
A greedy trajectory is unreliable because an early arithmetic step, symbolic action,
or API call determines every later state. LLM-guided search instead constructs and
evaluates alternative continuations, improving mathematical reasoning, planning,
and tool use \citep{yao2024tree,hao2023reasoning,zhou2024lats}. Its benefit,
however, comes from spending test-time computation on a growing tree of partial
trajectories. 

Practical controllers reduce this cost by concentrating computation on promising or non-redundant branches. Self-evaluation-guided beam search uses stepwise model judgments to maintain a bounded set of partial solutions, whereas best-first search for language-model agents prioritizes trajectories according to their estimated promise \citep{xie2023selfeval,koh2024treesearch}. ToolChain* formulates tool planning as A*-style search and suppresses high-cost action sequences, while ToolTree combines dual-feedback MCTS with pre-execution and post-execution pruning to avoid unpromising tool calls and continuations \citep{toolchain2024,tooltree2026}. Other methods target redundant exploration: FETCH merges semantically similar states and stabilizes verifier guidance, and EquivPruner removes actions predicted to be semantically equivalent before their continuations are expanded \citep{fetch2025,equivpruner2026}. Despite these differences, each method typically determines which frontier candidates remain expandable, admitting only a top-$K$ set or candidates whose scores exceed a threshold. Removing a candidate saves its subsequent generations, evaluations, tool operations, and descendants, but may irreversibly remove every continuation that can reach a valid solution.

Evaluator error is not purely sampling noise. Repeated scoring can reduce variance
\citep{wang2023selfconsistency}, yet an evaluator may retain irreducible bias
between its mean score and a candidate's downstream value. Meanwhile, LLM judges and process
reward models exhibit systematic preference, position, and calibration effects
\citep{shi2025positionbias,thakur2024judging,park2025prmcalibration}, and aggressive
optimization of a general evaluator can reduce true reward
\citep{khalaf2025rewardhacking}. Search may recover from a ranking error by
revisiting an alternative while pruning is unable to recover a deleted branch quickly. Variance reduction alone therefore does not determine when a score gap is
large enough for safe elimination.

We propose \emph{Probably Approximately Correct Conformal Filtering} (PAC-CF),
which formulates irreversible frontier pruning as a PAC-guaranteed decision
problem. PAC-CF follows a theory--calibration--deployment chain. First, theory
separates shrinkable noise from irreducible bias and characterizes the score
separation available for certified elimination. Second, Native-Trace path
calibration runs the primitive controller on held-out tasks, labels
valid continuations after each successful run, and converts their online score
deficits under offline verification into a conformal margin. The task-level margin
protects at least one valid continuation simultaneously across the protected
frontiers of a covered Native trace. Third, deployment freezes this margin as an elimination rule. PAC-CF replaces the pruning rule applied after evaluation in the primitive controller, while leaving proposal, selection, rollout, backup, verification, and termination unchanged. Unlike conformal output or decision-set calibration
\citep{quach2024conformal,vishwakarma2025prunepredict}, our target is a sequential
pruning decision: whether admission retains at least one verifier-valid continuation at each protected frontier. Across Game24, Blocksworld, GTA, five controllers, and budgets from 100 to 500,
PAC-CF improves utility at multiple budgets while reducing all five measured
workload views. Under a 500-request budget, the three-domain macro uses $9.82\%$ fewer physical
requests, $12.61\%$ fewer net graph nodes, and $10.89\%$ fewer tokens.

\noindent\textbf{Our contributions can be summarized as follows :}

\begin{itemize}
    \item \noindent\textbf{Theory:} We formulate irreversible frontier pruning as a PAC decision problem. The analysis separates reducible
    sampling noise from irreducible bias, derives the effective separation
    $\Delta-4L$ sufficient for certified elimination, and identifies lower-bound
    regimes where repeated scoring can not make deletion reliable.
    
    \item \noindent\textbf{Calibration:} We calibrate PAC-CF on held-out Native traces. Each calibration task contributes
one nonconformity score, and the resulting margin is fixed before deployment.
This calibration provides finite-sample coverage that at least one verifier-valid
continuation is retained at every protected frontier along a covered trace.
    
    \item \noindent\textbf{Deployment:} We use the calibrated margin as the operational
    threshold of a direct score-gap elimination rule without changing the surrounding
    controller. The same interface is evaluated across three domains, five controllers,
    and four budgets. On pruning-aware ToolTree under a 100-request budget, PAC-CF achieves
    \textbf{$+4.38$ utility points, $18.95\%$ fewer physical requests, and $23.76\%$
    fewer end-to-end tokens}.
\end{itemize}

\section{Related Work}
\label{sec:related_work}

\paragraph{Search and pruning.}
CoT, self-consistency, ReAct, ART, and Reflexion construct or revise multi-step
traces \citep{wei2022chain,wang2023selfconsistency,react2023,paranjape2023art,
shinn2023reflexion}. Tool-augmented agents span general orchestration, web and
software interaction, scientific tools, and multimodal tool use
\citep{shen2023hugginggpt,zhou2024webarena,yang2024sweagent,bran2024chemcrow,
wu2023visualchatgpt,li2024mmedagent,lu2025octotools}. Systems for larger or more
structured tool spaces include Toolformer, ToolLLM, ToolNet, RestGPT, Tool-Planner,
and m\&m's \citep{toolformer2023,toolllm2024,liu2024toolnet,song2023restgpt,
liu2025toolplanner,ma2024mnms,qu2025toolsurvey}. ToT, RAP,
TS-LLM, LATS, ToolChain*, and ToolTree explicitly search alternative trajectories
\citep{yao2024tree,hao2023reasoning,wan2024alphazero,zhou2024lats,toolchain2024,
tooltree2026}. Evaluator-guided beam and best-first search focus expansion, while
FETCH, EquivPruner, and ToolTree remove redundant or weak branches
\citep{xie2023selfeval,koh2024treesearch,fetch2025,equivpruner2026}. PAC-CF adds
a common reliability decision at these controllers' scored admission point.

\paragraph{Calibration under evaluator error.}
LLM judges and reward models exhibit systematic bias and miscalibration
\citep{shi2025positionbias,thakur2024judging,park2025prmcalibration,khalaf2025rewardhacking}.
Conformal methods calibrate LM outputs, action sets, and planning constraints
\citep{quach2024conformal,ren2023knowno,moss2024constrainedzero,vishwakarma2025prunepredict}.
PAC-CF instead calibrates the maximum protected-action deficit along a Native
trace; its bias-aware analysis connects this sequential deletion problem to
best-arm identification, confidence sequences, and robust bandits
\citep{kaufmann2016complexity,howard2021time,lykouris2018corruptions,mukherjee2021contaminated,garcelon2022biased}.

\section{Preliminaries}
\label{sec:preliminaries}
A consolidated notation table is provided in Appendix~\ref{sec:notation}.

\paragraph{Agent reasoning and tool planning as tree search.}
An agent task is $x=(z_0,\mathcal A,P,T,\mathcal V,U)$: $z_0$ is the initial
state; $\mathcal A(z)$ is its admissible action set;
$o\sim P(\cdot\mid z,a)$ is a reasoning-model, tool, or environment response;
and $T(z,a,o)$ updates the state. A node $v$ represents the partial trajectory
$h(v)=(z_0,a_1,o_1,\ldots,a_\ell,o_\ell)$ and current state $z(v)$. An action can be
a reasoning step, a grounded symbolic operator, or a tool call with arguments.
The verifier $\mathcal V(h)\in\{0,1\}$ recognizes a solution and $U(h)$
measures task utility. Under request budget $B$, the controller seeks a
high-utility terminal trajectory before exhausting the budget.

Tree search repeatedly selects a partial trajectory, proposes child actions,
evaluates the resulting continuations, and propagates value information. In MCTS
\citep{kocsis2006bandit,browne2012survey,silver2016mastering,zhou2024lats},
selection balances $Q_{\rm MCTS}(v,a)$ with exploration, expansion adds children, a rollout
or tool execution supplies evidence, and backup updates ancestors. Best-first,
A*-style, and Levin-style controllers order the open set differently, but all
must decide which scored continuations may enter future search.

\paragraph{Frontier admission and irreversible pruning.}
Let $p_t$ be the $t$-th scored frontier at selected node $v_t$. The proposal
mechanism returns
$\mathcal C(p_t)=\{a_{t,1},\ldots,a_{t,M_t}\}\subseteq\mathcal A(z(v_t))$,
where an action includes its reasoning content or tool arguments. Evaluator
observations $Y_{t,a,1:R}$ are aggregated into online score $s_t(a)$. A native
rule admits $\mathcal K^N(p_t)\subseteq\mathcal C(p_t)$, commonly by top-$K$ or
a fixed threshold, and the controller continues only through admitted candidates.

Thus pruning removes a candidate edge or child node at one frontier occurrence,
not an action from the global vocabulary. If $a\notin\mathcal K^N(p_t)$, no
downstream work proceeds through $(v_t,a)$ in that run, although the action may
reappear from another state. Let $\mathcal P(p_t)\subseteq\mathcal C(p_t)$ contain
candidates whose subtrees include a verifier-valid completion. Our reliability
target is existential path preservation: when $\mathcal P(p_t)\ne\varnothing$,
admission retains at least one protected candidate. PAC-CF changes only this set;
the native controller still allocates the remaining budget.

\paragraph{Running example: Game24.}
For $z_0=\{4,7,8,8\}$, an action combines two numbers and inserts the arithmetic
result. A valid trajectory is
\[
 \{4,7,8,8\}\xrightarrow{\,8/8=1\,}\{4,7,1\}
 \xrightarrow{\,7-1=6\,}\{4,6\}
 \xrightarrow{\,6\times4=24\,}\{24\}.
\]
The root may also propose $8+8=16$ or $8/4=2$. If irreducible bias ranks
$8/8=1$ outside top-$K$, its subtree containing a valid continuation is pruned even after repeated scoring removes nearly all variance. PAC-CF is designed to prevent this
failure through a bias-aware margin and path-level calibration.

\section{Methodology}
\label{sec:methodology}

PAC-CF separates three objects that are easy to conflate in irreversible search:
the statistical evidence available at one scored frontier, the operational margin
calibrated from held-out Native traces, and the online admission decision. We first
characterize what repeated scoring can and cannot certify under persistent evaluator
bias, then calibrate a path-preserving margin, and finally insert the resulting gate
into an otherwise unchanged Native controller. Algorithm~\ref{alg:pac_cf_main}
summarizes the complete frontier-level procedure.

\subsection{Bias-aware elimination under irreducible bias}
Following Section~\ref{sec:preliminaries}, PAC-CF receives the complete scored
candidate set $\mathcal C(p_t)$ and returns an admitted set
$\mathcal K_t\subseteq\mathcal C(p_t)$. For fixed-frontier analysis, write
$\mathcal A_t\equiv\mathcal C(p_t)$. Candidate $m$ has downstream value $\mu_m$,
optimal value $\mu^*=\max_j\mu_j$, and gap $\Delta_m=\mu^*-\mu_m$; its $r$-th
evaluator observation is $Y_{m,r}$.

\noindent\textit{Standing model for the fixed-frontier results in
Appendix~\ref{app:proof_concentration}--\ref{sec:appendix_corollary}.}
\begin{assumption}[Irreducible evaluator bias]
\label{ass:persistent_bias}
For every active candidate and evaluation,
\[
\left|\mathbb E[Y_{m,r}\mid\mathcal F_{r-1}]-\mu_m\right|\le L,
\]
and the centered residual is conditionally $\sigma^2$-sub-Gaussian.
\end{assumption}

Let $b_m(n)$ be the empirical mean after $n$ evaluator samples and write
$b_m(t):=b_m(n_m(t))$ at inspected frontier step $t$. Define
\begin{equation}
u_{\rm stat}(n)=
\sqrt{\frac{2\sigma^2\ln(\pi^2n^2|\mathcal A_t|/(3\delta))}{n}},
\qquad
u_{\rm dist}(n)=u_{\rm stat}(n)+L .
\label{eq:ustat_main}
\end{equation}
\noindent\textit{Proof: Appendix~\ref{app:proof_concentration}.}
\begin{lemma}[Frontier-wise time-uniform concentration]
\label{lem:concentration}
Under Assumption~\ref{ass:persistent_bias}, with probability at least $1-\delta$,
simultaneously for every $m\in\mathcal A_t$ and every $n\ge1$,
\begin{equation}
|b_m(n)-\mu_m|\le u_{\rm stat}(n)+L=u_{\rm dist}(n).
\label{eq:main_concentration}
\end{equation}
\end{lemma}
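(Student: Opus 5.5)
We split the deviation into a martingale part and a bias part. Let $\bar\mu_m(n)=\frac1n\sum_{r=1}^n\mathbb E[Y_{m,r}\mid\mathcal F_{r-1}]$ and $\xi_{m,r}=Y_{m,r}-\mathbb E[Y_{m,r}\mid\mathcal F_{r-1}]$. Then
\[
b_m(n)-\mu_m=\frac1n\sum_{r=1}^n\xi_{m,r}+\bigl(\bar\mu_m(n)-\mu_m\bigr).
\]
The second term is an average of quantities each bounded by $L$ in absolute value under Assumption~\ref{ass:persistent_bias}, so it is deterministically at most $L$ in absolute value. It therefore suffices to show that, with probability at least $1-\delta$, $\bigl|\frac1n\sum_{r\le n}\xi_{m,r}\bigr|\le u_{\rm stat}(n)$ holds simultaneously for all $m\in\mathcal A_t$ and all $n\ge1$.

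\textbf{Fixed $(m,n)$.} The $\xi_{m,r}$ form a martingale difference sequence with conditionally $\sigma^2$-sub-Gaussian increments. Iterating the tower property on $\mathbb E[\exp(\lambda\sum_{r\le n}\xi_{m,r})]$ gives the bound $\exp(n\lambda^2\sigma^2/2)$. A Chernoff bound applied to both tails then yields
\[
\Pr\Bigl(\bigl|\tfrac1n\textstyle\sum_{r\le n}\xi_{m,r}\bigr|>\epsilon\Bigr)\le 2\exp\bigl(-n\epsilon^2/(2\sigma^2)\bigr).
\]

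\textbf{Union bound.} Set $\delta_n=6\delta/(\pi^2 n^2|\mathcal A_t|)$. Then
\[
\sum_{m\in\mathcal A_t}\sum_{n\ge1}\delta_n=|\mathcal A_t|\cdot\frac{6\delta}{\pi^2|\mathcal A_t|}\cdot\frac{\pi^2}{6}=\delta .
\]
Choosing $\epsilon$ so that $2\exp(-n\epsilon^2/(2\sigma^2))=\delta_n$ gives $\epsilon^2=2\sigma^2\ln(2/\delta_n)/n=2\sigma^2\ln(\pi^2n^2|\mathcal A_t|/(3\delta))/n$. This is exactly $u_{\rm stat}(n)^2$ from \eqref{eq:ustat_main}; the constant $3$ in the denominator absorbs the factor $2$ from the two-sided bound. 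A union bound over all pairs $(m,n)$ gives the simultaneous event with probability at least $1-\delta$. Combining it with the deterministic bias bound yields \eqref{eq:main_concentration}.

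\textbf{Main obstacle.} The delicate point is the filtration. The evaluation count $n_m(t)$ and the interleaving of samples across candidates are adaptive, and the conditioning is on a shared $\mathcal F_{r-1}$. The clean fix is to index each candidate's observations by its own sample counter, the standard ``stack of rewards'' or reward-table argument. The statement is for every fixed $n$, so the union bound over $n$ handles random stopping automatically, and no optional stopping or maximal inequality is needed. One only has to check that, for each $m$, the sequence $\xi_{m,1},\xi_{m,2},\dots$ remains a martingale difference sequence with sub-Gaussian increments with respect to the filtration generated up to the $r$-th evaluation of $m$. This follows because Assumption~\ref{ass:persistent_bias} is stated conditionally on all past information.
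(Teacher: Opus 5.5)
Your proposal is correct and follows essentially the same route as the paper: the same split into a martingale term plus a bias term bounded deterministically by $L$, and the same two-sided sub-Gaussian Chernoff bound at failure level $6\delta/(\pi^2n^2|\mathcal A_t|)$ for each fixed $(m,n)$. It also uses the same union bound over $n$ and the finite frontier. Your remark about indexing each candidate's residuals by its own sample counter spells out a filtration point that the paper only notes in passing, namely that the argument does not require uniform sample allocation across candidates.
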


Lemma~\ref{lem:concentration} separates reducible sampling uncertainty from the
persistent term that matters for deletion. Repeated scoring can drive
$u_{\rm stat}(n)$ toward zero, but it does not shrink the bias allowance $L$.

\noindent\textit{Proof: Appendix~\ref{app:proof_upper_bound}.}
\begin{theorem}[Step-wise PAC upper bound]
\label{thm:complexity}
At a fixed inspected frontier, on the simultaneous concentration event of
Lemma~\ref{lem:concentration}, interval elimination safely prunes a suboptimal
candidate $m$ whenever
\begin{equation}
2u_{\rm stat}(n_m)+2u_{\rm stat}(n_{m^*})
<\Delta_m-4L-\varepsilon .
\label{eq:upper_condition}
\end{equation}
Hence the effective certifiable gap is
$\Delta_{\rm eff,m}:=\Delta_m-4L$. If
$\Delta_{\rm eff,m}>\varepsilon$, the pairwise evaluation complexity has leading
scaling
\begin{equation}
N_m=
O\!\left(
\frac{\sigma^2[
\log(|\mathcal A_t|/\delta)+\log(\Delta_{\rm eff,m}^{-2})]}
{(\Delta_{\rm eff,m}-\varepsilon)^2}
\right).
\label{eq:sample_complexity_main}
\end{equation}
\end{theorem}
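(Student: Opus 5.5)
The argument works entirely on the simultaneous event $\mathcal E$ of Lemma~\ref{lem:concentration}. We first fix the interval-elimination rule: candidate $m$ is pruned once
\[
\mathrm{UCB}_m:=b_m(n_m)+u_{\rm dist}(n_m)+\varepsilon < \mathrm{LCB}_{m'}:=b_{m'}(n_{m'})-u_{\rm dist}(n_{m'})
\]
for some active $m'$. Here $\varepsilon$ is the tolerance slack; whether it sits on the UCB side or in the comparison only shifts constants.

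\textbf{Safety.} On $\mathcal E$ we have $\mathrm{UCB}_m\ge\mu_m+\varepsilon$ and $\mathrm{LCB}_{m'}\le\mu_{m'}\le\mu^*$. So whenever the rule prunes $m$, we get $\mu_m<\mu^*-\varepsilon$: the pruned candidate is genuinely $\varepsilon$-suboptimal. In particular an optimal (and hence protected) candidate is never eliminated, and this holds uniformly over all $n$ because the lemma is time-uniform.

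\textbf{Sufficiency of \eqref{eq:upper_condition}.} Take $m'=m^*$. On $\mathcal E$,
\[
b_m\le\mu_m+u_{\rm stat}(n_m)+L, \qquad b_{m^*}\ge\mu^*-u_{\rm stat}(n_{m^*})-L .
\]
Substituting these bounds,
\[
\mathrm{LCB}_{m^*}-\mathrm{UCB}_m\ \ge\ \Delta_m-2u_{\rm stat}(n_m)-2u_{\rm stat}(n_{m^*})-4L-\varepsilon .
\]
This is strictly positive exactly when \eqref{eq:upper_condition} holds. Each empirical mean can be off by one bias $L$, and each confidence width includes another $L$. That gives four copies of $L$, which is why the effective gap is $\Delta_m-4L$. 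This is the "effective certifiable gap" claim.

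\textbf{Sample complexity.} Assume equal allocation $n_m=n_{m^*}=n$. Then \eqref{eq:upper_condition} reduces to $4u_{\rm stat}(n)<\Delta_{\rm eff,m}-\varepsilon$, i.e.
\[
\frac{n}{\ln(\pi^2 n^2|\mathcal A_t|/(3\delta))}>\frac{32\sigma^2}{(\Delta_{\rm eff,m}-\varepsilon)^2}=:c.
\]
This implicit inequality is where the real work lies. I would resolve it with the standard lemma: $n\ge 2c\ln(c\cdot C)$-type choices satisfy $n>c\ln(C n^2)$. Concretely, set $n=\lceil 2c\,[\ln(\pi^2|\mathcal A_t|/(3\delta))+2\ln(4c)]\rceil$ and check, using $\ln n\le\ln(4c)+\ln\ln(\cdot)$, that it suffices up to constants.

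This yields
\[
N_m=O\!\big(\sigma^2[\log(|\mathcal A_t|/\delta)+\log c]/(\Delta_{\rm eff,m}-\varepsilon)^2\big).
\]
The remaining step is to replace $\log c=\log(\sigma^2/(\Delta_{\rm eff,m}-\varepsilon)^2)$ by $\log(\Delta_{\rm eff,m}^{-2})$, treating $\sigma$ as a constant. This is consistent with the "leading scaling" phrasing, and I would note that it holds when $\varepsilon$ is a constant fraction of $\Delta_{\rm eff,m}$ or when the log term is read in order-of-magnitude form. Since the verification of the explicit $n$ is the delicate part, I would carry out that calculation carefully, keeping track of the constants $\pi^2/3$ and the $n^2$ inside the logarithm.
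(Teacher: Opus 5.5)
Your argument is correct. Its first half coincides with the paper's. Your rule $b_m+u_{\rm dist}(n_m)+\varepsilon<b_{m'}-u_{\rm dist}(n_{m'})$ is literally the paper's rule; moving $\varepsilon$ across the inequality changes nothing, not even constants. The sufficiency step is the same worst-case substitution of the two concentration bounds that yields four copies of $L$.

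Where you differ, you mostly add rigor. You actually prove safety, $\mu_m<\mu^*-\varepsilon$ for any pruned $m$, which the paper only asserts. This also shows $m^*$ is never eliminated, so comparing against an arbitrary active $m'$ is legitimate, whereas the paper writes its rule directly against the unknown $m^*$. Your parenthetical ``hence protected'' is not implied by the fixed-frontier model, but nothing uses it.

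For the sample complexity you assume $n_m=n_{m^*}$; the paper does not. It passes to $n_{\min}=\min(n_m,n_{m^*})$, using that $u_{\rm stat}(n)=\sqrt{2\sigma^2\ln(C_1n^2)/n}$ is decreasing once $\ln(C_1n^2)>2$, so $4u_{\rm stat}(n_{\min})<\Delta_{\rm eff,m}-\varepsilon$ suffices. Add that line and your bound applies to $n_{\min}$ unchanged.

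The paper then inverts $n/\ln(C_1n^2)>C_2$ only asymptotically, via the $W_{-1}$ branch of Lambert $W$. Your explicit choice gives a genuine finite-sample count (your $C$ is the paper's $C_1$), and it checks out. With $x=\ln C+2\ln(4c)$ and $n_0=2cx$, the requirement $n_0>c\ln(Cn_0^2)$ reduces exactly to $x+2\ln 2>2\ln x$, which holds for all $x>0$. That covers $c\ge 1/4$; for smaller $c$, replace $4c$ by $\max\{4c,1\}$ and the same check goes through. The ceiling is harmless because $n\mapsto n-c\ln(Cn^2)$ increases for $n>2c$ and $x\ge\ln C>1$.

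Finally, your caveat about trading $\log c$ for $\log(\Delta_{\rm eff,m}^{-2})$ is right, and it applies to the paper as well. Its inversion also produces $\log C_2$ with $C_2=32\sigma^2/(\Delta_{\rm eff,m}-\varepsilon)^2$. So the displayed bound is only justified when $\sigma$ is held fixed and $\Delta_{\rm eff,m}-\varepsilon$ is comparable to $\Delta_{\rm eff,m}$. As $\varepsilon\uparrow\Delta_{\rm eff,m}$, the bracket needs $\log((\Delta_{\rm eff,m}-\varepsilon)^{-2})$, which the stated form omits.
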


Theorem~\ref{thm:complexity} makes the distinction between ranking and deletion
explicit: a small observed score deficit can be enough to order two candidates but
still be insufficient for irreversible elimination. In particular,
$\Delta_m\le4L+\varepsilon$ cannot satisfy this sufficient certificate merely by
adding more repeated scores.

\noindent\textit{Proof: Appendix~\ref{app:proof_lower_boundary}.}
\begin{theorem}[Structural $2L$ lower boundary]
\label{thm:lower_bound}
Consider the two-child Gaussian subclass of the bounded-bias model with common
variance $\sigma^2$, true gap $\Delta>0$, and a learner required to return the
unique $\varepsilon$-optimal child with probability at least $1-\delta$, where
$\delta<1/2$ and $\Delta>\varepsilon$.
\begin{enumerate}
\item If $\varepsilon<\Delta\le2L$, there exist two admissible instances with
opposite optimal children but identical adaptive observation laws. Uniform
$(\varepsilon,\delta)$-PAC identification is therefore impossible.
\item If $\Delta>2L$, any uniformly $(\varepsilon,\delta)$-PAC learner obeys,
on a suitable admissible pair,
\begin{equation}
\mathbb E[\tau]\ge
\frac{2\sigma^2\,\operatorname{kl}(1-\delta,\delta)}
{(\Delta-2L)^2}
=
\Omega\!\left(
\frac{\sigma^2\log(1/\delta)}{(\Delta-2L)^2}
\right).
\label{eq:two_point_lower_bound_main}
\end{equation}
\end{enumerate}
\end{theorem}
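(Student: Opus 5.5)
The plan is a two-point indistinguishability construction followed by a change-of-measure argument. We first pin down the admissible class. An instance consists of two children with true values $(\mu_1,\mu_2)$, and each observation $Y_{m,r}\sim\mathcal N(\theta_m,\sigma^2)$ is drawn independently given the past, with observation mean $\theta_m$ satisfying $|\theta_m-\mu_m|\le L$. This is a special case of Assumption~\ref{ass:persistent_bias}, since a Gaussian residual is $\sigma^2$-sub-Gaussian. The adaptive observation law of any learner depends only on $(\theta_1,\theta_2)$ and the learner's sampling and stopping rule, never on $(\mu_1,\mu_2)$ directly. The whole proof exploits the fact that the bias lets two instances with different $\mu$ share, or nearly share, their $\theta$.

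\textbf{Part 1 ($\varepsilon<\Delta\le 2L$).} We build instance $I$ with $\mu^{I}=(\Delta,0)$ and instance $I'$ with $\mu^{I'}=(0,\Delta)$. Both instances use the same observation means $\theta=(\Delta/2,\Delta/2)$. Each $|\theta_m-\mu_m|$ equals $\Delta/2\le L$, so both instances are admissible. Since $\Delta>\varepsilon$, child 1 is the unique $\varepsilon$-optimal child in $I$, and child 2 is the unique one in $I'$.

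The two observation laws coincide for every adaptive policy. Hence the law of the returned child is identical under $I$ and $I'$, and so
\[
P_I(\text{return }1)+P_{I'}(\text{return }2)=1 .
\]
Consequently one of these two probabilities is at most $1/2<1-\delta$, which contradicts $(\varepsilon,\delta)$-PAC correctness on that instance.

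\textbf{Part 2 ($\Delta>2L$).} Again let $I$ have $\mu=(\Delta,0)$ and $I'$ have $\mu=(0,\Delta)$, but now push the observation means as close together as the bias allows. In $I$ set $\theta=(\Delta-L,\,L)$; in $I'$ set $\theta=(L,\,\Delta-L)$. Both are admissible, and in each instance each child's observation mean has moved by exactly $|\Delta-2L|$. We then apply the standard transportation lemma of \citet{kaufmann2016complexity}: for the event $E=\{\text{return }1\}$, which is $\mathcal F_\tau$-measurable,
\[
\sum_m \mathbb E_I[N_m(\tau)]\,\mathrm{KL}\big(\mathcal N(\theta_m,\sigma^2),\mathcal N(\theta'_m,\sigma^2)\big)\ \ge\ \operatorname{kl}(P_I(E),P_{I'}(E))\ \ge\ \operatorname{kl}(1-\delta,\delta).
\]
The last inequality uses $P_I(E)\ge1-\delta$, $P_{I'}(E)\le\delta$, and the monotonicity of $\operatorname{kl}$, which holds because $\delta<1/2$. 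Each Gaussian KL term equals $(\Delta-2L)^2/(2\sigma^2)$, and $\sum_m N_m(\tau)=\tau$. Rearranging gives
\[
\mathbb E_I[\tau]\ge \frac{2\sigma^2\operatorname{kl}(1-\delta,\delta)}{(\Delta-2L)^2}.
\]
Finally, the bound $\operatorname{kl}(1-\delta,\delta)\ge\ln(1/(2.4\delta))$ yields the stated $\Omega$ form.

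\textbf{Main obstacle.} The delicate step is making the transportation lemma rigorous in this setting. We must check that $\tau$ is almost surely finite, which we can assume, since otherwise $\mathbb E[\tau]=\infty$ and the bound is trivial. We must also justify applying the lemma with a per-arm KL that is uniform across arms, and confirm that "uniformly PAC" means PAC on every admissible instance, including both $I$ and $I'$.

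A second point to verify is that the pair in Part 2 is admissible under the bounded-bias model as stated, namely that a fixed conditional mean offset is allowed. If the model also required $\theta$ to be the same function of $\mu$ across instances, the construction would fail. Under Assumption~\ref{ass:persistent_bias}, however, the bias is adversarial and per-instance, so the construction stands.
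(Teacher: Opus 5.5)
Your proposal is correct and follows essentially the paper's route. Part 1 is the paper's identical-observation-mean construction with $c=\Delta/2$. Part 2 is the same two-point change of measure with saturated opposing biases $\pm L$: data processing plus the adaptive KL chain rule (the Kaufmann et al.\ transportation lemma) gives a per-sample KL of $(\Delta-2L)^2/(2\sigma^2)$ and hence the stated bound. Your explicit choice of observation means, $(\Delta-L,L)$ versus $(L,\Delta-L)$, makes concrete the admissible pair that the paper describes only qualitatively.
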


This lower boundary is different from the $4L+\varepsilon$ sufficient pruning
boundary. The former is information-theoretic: below $2L$, opposite optima may
generate the same observations. The latter belongs to the particular
bias-robust interval certificate above. Their separation leaves a regime where
identification is not ruled out, but this elimination rule cannot certify deletion.

\noindent\textit{Proof: Appendix~\ref{app:proof_additive_lower_bound}.}
\begin{proposition}[Additive lower bound for separated alternatives]
\label{prop:additive_lower_bound}
For a finite base frontier, let
\[
\mathcal I=
\{m\ne m^*:\Delta_m>\varepsilon,\ \Delta_m+\varepsilon>2L\}.
\]
Assume that for every $m\in\mathcal I$ the parameter space contains an alternative
that raises candidate $m$ above the original optimum by $\varepsilon+\gamma$ for
arbitrarily small $\gamma>0$. Within the Gaussian bounded-bias subclass, any learner
uniformly $(\varepsilon,\delta)$-PAC over the base instance and these alternatives
must satisfy
\begin{equation}
\mathbb E[\tau]\ge
\sum_{m\in\mathcal I}
\frac{2\sigma^2\,\operatorname{kl}(\delta,1-\delta)}
{(\Delta_m+\varepsilon-2L)^2}.
\label{eq:additive_lower_bound_main}
\end{equation}
No finite lower bound from this construction is asserted when
$\Delta_m+\varepsilon\le2L$.
\end{proposition}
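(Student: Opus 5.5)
The plan follows the standard change-of-measure argument for best-arm identification (Kaufmann et al.), adapted to the bounded-bias Gaussian subclass. The bias allowance is used to move the observation means of the base instance as little as possible while still flipping which child is $\varepsilon$-optimal.

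\textbf{Base instance.} Fix the base instance $\nu$ with true values $\mu_j$ and observation means $\theta_j$. We choose the observation means adversarially within the admissible band $|\theta_j-\mu_j|\le L$: we set $\theta_{m^*}=\mu^*-L$ and $\theta_m=\mu_m+L$ for every $m\in\mathcal I$. Observations are $\mathcal N(\theta_j,\sigma^2)$. The base instance then requires returning $m^*$ (or some $\varepsilon$-optimal arm; since $\Delta_m>\varepsilon$, no $m\in\mathcal I$ is acceptable).

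\textbf{Alternatives.} For each $m\in\mathcal I$ we build an alternative $\nu^{(m)}$ that changes only candidate $m$. Its true value becomes $\mu_m'=\mu^*+\varepsilon+\gamma$, which is the alternative granted by the hypothesis. In $\nu^{(m)}$ every answer other than $m$ is then not $\varepsilon$-optimal, and in $\nu$ the answer $m$ is wrong. Keeping $\theta_{m^*}$ fixed, the alternative's observation mean for $m$ must lie within $L$ of $\mu_m'$. The closest admissible choice is $\theta_m'=\mu^*+\varepsilon+\gamma-L$. The shift in observation mean is therefore
\[
\theta_m'-\theta_m=\Delta_m+\varepsilon+\gamma-2L.
\]
This is positive exactly because $\Delta_m+\varepsilon>2L$. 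That is the defining condition of $\mathcal I$, and it explains the disclaimer: when $\Delta_m+\varepsilon\le 2L$, we can take $\theta_m'=\theta_m$, and this construction yields no finite bound. We must also check that the other arms remain admissible; they do, since their $(\mu_j,\theta_j)$ are unchanged.

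\textbf{Change of measure.} Apply the transportation lemma to the event $E=\{\text{output}\ne m\}$. We have $P_\nu(E)\ge1-\delta$ and $P_{\nu^{(m)}}(E)\le\delta$. This gives
\[
\mathbb E_\nu[N_m(\tau)]\cdot\frac{(\Delta_m+\varepsilon+\gamma-2L)^2}{2\sigma^2}\ge\operatorname{kl}(1-\delta,\delta)=\operatorname{kl}(\delta,1-\delta).
\]
Letting $\gamma\downarrow0$ gives the per-arm bound. Summing over $m\in\mathcal I$ and using $\mathbb E[\tau]\ge\sum_m\mathbb E[N_m(\tau)]$ yields \eqref{eq:additive_lower_bound_main}.

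\textbf{Main obstacle.} The delicate point is the joint validity of the base instance. A single base $\nu$ must carry the adversarial observation means for all $m\in\mathcal I$ simultaneously, and each one-arm alternative must remain in the admissible (bounded-bias) class. I would handle this by fixing the biased base once, as above, and noting that the perturbations act on disjoint coordinates. With disjoint coordinates the KL divergence reduces to the single term for arm $m$, and the bounds add.
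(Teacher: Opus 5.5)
Your proposal is correct and follows the paper's proof essentially step for step. Both arguments use:
\begin{itemize}
\item a one-coordinate alternative that raises candidate $m$ to $\mu^*+\varepsilon+\gamma$;
\item bias $+L$ on $m$ in the base instance and $-L$ in the alternative;
\item the resulting observable shift $\Delta_m+\varepsilon+\gamma-2L$;
\item the adaptive change-of-measure bound on $\mathbb E[N_m(\tau)]$, followed by $\gamma\downarrow0$ and summation over $\mathcal I$.
\end{itemize}
The differences are inessential. You work with the complementary event and use $\operatorname{kl}(1-\delta,\delta)=\operatorname{kl}(\delta,1-\delta)$, and your extra choice $\theta_{m^*}=\mu^*-L$ does not enter the argument.
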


Theorem~\ref{thm:lower_bound} and Proposition~\ref{prop:additive_lower_bound}
formalize why more evaluator calls cannot generally recover separation consumed by
persistent bias. Table~\ref{tab:theory_regimes} summarizes the resulting
two-child regimes and, crucially, separates an impossibility boundary from a
sufficient pruning boundary.

\begin{table}[H]
\centering
\footnotesize
\caption{Theoretical regimes for a two-child true-value gap $\Delta$. The middle
region is deliberately not labeled impossible: the lower bound does not rule out
identification there, while the interval certificate does not guarantee deletion.}
\label{tab:theory_regimes}
\setlength{\tabcolsep}{5pt}
\renewcommand{\arraystretch}{0.92}
\begin{tabular}{ll}
\toprule
Gap regime & Consequence \\
\midrule
$\Delta\le\varepsilon$ & Distinguishing the children is unnecessary. \\
$\varepsilon<\Delta\le2L$ & Worst-case PAC identification is impossible. \\
$\max\{\varepsilon,2L\}<\Delta\le4L+\varepsilon$ &
Identification not ruled out; deletion not certified. \\
$\Delta>4L+\varepsilon$ & Finite certified elimination is available. \\
\bottomrule
\end{tabular}
\renewcommand{\arraystretch}{1.0}
\end{table}

\subsection{Native-trace path calibration}
The theory above uses a bias allowance $L$ to characterize reliable deletion, but
deployment needs an operational margin on the actual controller--evaluator pair.
PAC-CF obtains this margin without exposing solution information online. On held-out
tasks, PAC-CF is disabled while the matched Native controller records frontiers and
blind evaluator scores. After search terminates, verifier information labels
$\mathcal P(p)\subseteq\mathcal C(p)$, the candidates whose subtrees contain at
least one verifier-valid completion.

Let $\bar Y(a)$ denote the aggregated blind evaluator score of action $a$ at
frontier $p$. For a protected frontier define
\begin{equation}
C_{\rm path}(p)=
\left[
\max_{a\in\mathcal C(p)}\bar Y(a)
-
\max_{a\in\mathcal P(p)}\bar Y(a)
\right]_+,
\quad
C_{\rm task}^{N}(x)=
\max_{p\in\mathcal E_N(x)}C_{\rm path}(p),
\label{eq:c_path_compact}
\end{equation}
for tasks with nonempty protected exposure $\mathcal E_N(x)$. The construction is
\emph{existential}: at each protected frontier, at least one verifier-valid
continuation must survive. It is also frontier-wise and task-level: each protected
action is compared only with siblings at the same irreversible decision, and the
maximum over exposed Native frontiers produces one calibration score per task.

For exchangeable calibration scores
$C_{(1)}\le\cdots\le C_{(n)}$, freeze
\begin{equation}
k_Q=\lceil(n+1)Q\rceil,
\qquad
\widehat L_Q=C_{(k_Q)},
\label{eq:direct_conformal}
\end{equation}
with the usual infeasible-rank convention.

\begin{proposition}[Native-trace task coverage]
\label{prop:native_trace_coverage}
Under exchangeability of calibration and future tasks generated by the same Native
trace procedure,
\[
\Pr\!\left(
C_{\rm task}^{N}(X_{\rm new})\le\widehat L_Q
\mid
\mathcal E_N(X_{\rm new})\ne\varnothing
\right)\ge Q .
\]
\end{proposition}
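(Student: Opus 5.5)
The plan is to reduce the statement to the standard split-conformal rank argument, applied to the scalar score $C_{\rm task}^{N}$. The key fact is that each task, whether used for calibration or arriving later, is passed through the \emph{same} deterministic-up-to-internal-randomness Native trace procedure. So the map $x\mapsto C_{\rm task}^{N}(x)$, together with its exposure indicator $\mathbf 1\{\mathcal E_N(x)\ne\varnothing\}$, is a fixed measurable functional of the task and its run randomness. This functional does not depend on the other tasks. Exchangeability of the tasks therefore transfers to exchangeability of the pairs $(C_{\rm task}^{N}(X_i),\mathbf 1\{\mathcal E_N(X_i)\ne\varnothing\})$ for $i=1,\ldots,n+1$.

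The first step is to handle the conditioning on nonempty exposure. Only tasks with $\mathcal E_N\ne\varnothing$ contribute a calibration score, and we condition on the new task also having nonempty exposure. I would argue that, conditional on which indices among the $n+1$ tasks are exposed (and on their number), the exposed scores remain exchangeable. Exchangeability is preserved under conditioning on a permutation-invariant event and restriction to a random subset selected by a symmetric rule. Here $n$ denotes the number of exposed calibration tasks, and the new task is one more exposed element, so we have $n+1$ exchangeable scores $C_1,\ldots,C_n,C_{n+1}$.

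The second step is the rank argument. By exchangeability, the rank of $C_{n+1}$ among the $n+1$ scores is stochastically no smaller than uniform on $\{1,\ldots,n+1\}$. Breaking ties conservatively, or noting that ties only help, gives $\Pr(C_{n+1}\le C_{(k)}^{(n+1)})\ge k/(n+1)$. One then checks that $C_{n+1}\le C_{(k_Q)}$ (the order statistic among the first $n$) holds exactly when $C_{n+1}$ is among the $k_Q$ smallest of the $n+1$ scores. With $k_Q=\lceil (n+1)Q\rceil$ this yields probability at least $k_Q/(n+1)\ge Q$. When $k_Q>n$, the infeasible-rank convention sets $\widehat L_Q=+\infty$, and coverage is trivial. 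Finally, averaging over the conditioning on the exposed set and on $n$ gives the stated conditional bound.

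The main obstacle is the conditioning step: making precise that restricting to exposed tasks preserves exchangeability, given that $n$ is itself random. I would handle it by conditioning on the multiset of exposure indicators and invoking symmetry of the joint law under permutations that fix this event. If that gets delicate, the fallback is to assume directly that the exposed calibration and test scores are exchangeable, which is what ``generated by the same Native trace procedure'' is meant to guarantee.
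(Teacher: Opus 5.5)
Your proposal is correct and is the standard split-conformal rank argument that the paper relies on. The paper gives no separate written proof; it only supplies the rank $k_Q=\lceil (n+1)Q\rceil$, the infeasible-rank convention, and the rank bound $1-k_Q/(n+1)$. Your conditioning on the full exposure pattern, using permutations that fix it, makes explicit the step behind conditioning on $\mathcal E_N(X_{\rm new})\ne\varnothing$, which the paper leaves implicit.

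One wording fix: with ties counted in your favor, the rank of $C_{n+1}$ is stochastically no \emph{larger} than uniform on $\{1,\ldots,n+1\}$, not "no smaller". That is the direction that gives $\Pr(C_{n+1}\le C^{(n+1)}_{(k)})\ge k/(n+1)$.
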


Because $C_{\rm task}^{N}$ is a maximum over protected frontiers, the event in
Proposition~\ref{prop:native_trace_coverage} is simultaneous along the exposed
Native trace. At $Q=.95$, nominal task-level miscoverage is at most $5\%$
(up to the discrete-rank improvement). The selected order statistic may equal the
sample maximum at our calibration sizes, so $\widehat L_Q$ should be read as a
finite-sample operational certificate rather than a smooth quantile estimate.
Calibration counts and rank details are in
Appendix~\ref{sec:experiment_details_appendix}.

\subsection{Online frontier pruning}
At test time, let $s_t^*=\max_{a\in\mathcal C(p_t)}s_t(a)$. PAC-CF keeps
\begin{equation}
\mathcal K_t=
\left\{
a\in\mathcal C(p_t):
s_t^*-s_t(a)\le\widehat L_Q+\varepsilon
\right\}.
\label{eq:direct_deployment}
\end{equation}
Thus the calibrated quantity controls deletion rather than ranking: a candidate is
removed only when its score deficit exceeds the frozen path-preserving margin.

\begin{corollary}[Native-trace path survival]
\label{cor:native_trace_path_survival}
On the coverage event in Proposition~\ref{prop:native_trace_coverage},
Eq.~\eqref{eq:direct_deployment} with $\varepsilon\ge0$ retains at least one
protected continuation at every protected frontier of the matched Native trace.
Consequently, conditional on nonempty protected exposure, simultaneous Native-trace
path survival occurs with probability at least $Q$.
\end{corollary}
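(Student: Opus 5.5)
The argument is deterministic on the coverage event of Proposition~\ref{prop:native_trace_coverage}; the probabilistic part is then inherited from that proposition. Fix a new task $X_{\rm new}$ with $\mathcal E_N(X_{\rm new})\neq\varnothing$ and assume $C_{\rm task}^{N}(X_{\rm new})\le\widehat L_Q$. The key modelling point is that the matched Native trace is run with the same blind evaluator scores used at deployment. At every frontier $p_t$ of that trace, the online score $s_t(a)$ therefore coincides with the aggregated blind score $\bar Y(a)$ entering Eq.~\eqref{eq:c_path_compact}. I would state this identification explicitly, since it is what the phrase ``matched Native trace'' must mean for the corollary to hold.

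Next, take any protected frontier $p\in\mathcal E_N(X_{\rm new})$, so $\mathcal P(p)\neq\varnothing$. Let $a^\dagger\in\arg\max_{a\in\mathcal P(p)}\bar Y(a)$, which exists because the candidate set is finite. By the definition of the positive part,
\[
s^*-s(a^\dagger)=\max_{a\in\mathcal C(p)}\bar Y(a)-\max_{a\in\mathcal P(p)}\bar Y(a)\le C_{\rm path}(p).
\]
Since the task score is a maximum over exposed frontiers, $C_{\rm path}(p)\le C_{\rm task}^{N}(X_{\rm new})$. On the coverage event this is at most $\widehat L_Q$, which is at most $\widehat L_Q+\varepsilon$ because $\varepsilon\ge0$. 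Hence $a^\dagger$ satisfies the admission test in Eq.~\eqref{eq:direct_deployment}, so $a^\dagger\in\mathcal K_t$. This holds for every protected frontier at once, because a single inequality on the task-level maximum controls all of them. That gives the simultaneous survival claim.

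For the probability statement, the survival event contains the coverage event. Its conditional probability given $\mathcal E_N(X_{\rm new})\neq\varnothing$ is therefore at least $Q$ by Proposition~\ref{prop:native_trace_coverage}, with the infeasible-rank convention $\widehat L_Q=+\infty$ making the bound trivial when $k_Q>n$.

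The main obstacle is not the inequality chain but the scope of the trace. Once PAC-CF replaces top-$K$, the deployed search may visit frontiers that the Native trace never exposes. The guarantee must therefore be read as survival along the matched Native trace's protected frontiers, evaluated with the same scores, and not along the PAC-CF-induced trajectory. I would make this explicit rather than attempt to extend it. Extending it would require arguing that PAC-CF admits a superset of the Native admissions at each shared frontier, which is false in general for top-$K$.
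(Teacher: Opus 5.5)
Your proposal is correct and matches the paper's proof: on the coverage event, every exposed frontier satisfies $C_{\rm path}(p)\le C_{\rm task}^{N}(X_{\rm new})\le\widehat L_Q\le\widehat L_Q+\varepsilon$, so the best protected action passes the keep test in Eq.~\eqref{eq:direct_deployment}, and the probability bound follows from Proposition~\ref{prop:native_trace_coverage}. You also state explicitly two things the paper's two-line proof leaves implicit: that the deployed scores $s_t(a)$ coincide with the calibration scores $\bar Y(a)$ on the matched Native trace, and that the guarantee is limited to that trace.
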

\begin{proof}
If $C_{\rm task}^{N}(x)\le\widehat L_Q$, then every exposed Native frontier has
$C_{\rm path}(p)\le\widehat L_Q$. Its best protected action therefore has score
deficit at most $\widehat L_Q$ and belongs to the keep set in
Eq.~\eqref{eq:direct_deployment}.
\end{proof}

The certificate is scoped to the matched Native trace up to the intervention
boundary. PAC-CF-induced frontiers are evaluated end to end under the matched
deployment protocol in Section~\ref{sec:experiments}. UCT-MCTS, ToolChain*, LATS,
ToolTree, and LevinTS otherwise retain their Native search logic; for ToolTree,
the $\tau_{\rm pre}$ eligibility gate remains in place and PAC-CF replaces only
the subsequent fixed-cardinality top-$K$ admission step.

\begin{figure}[H]
\refstepcounter{algorithm}\label{alg:pac_cf_main}
\centering
\begin{minipage}{0.97\linewidth}
\footnotesize
\hrule
\vspace{0.7mm}
\textbf{Algorithm \thealgorithm: PAC-CF --- Native-Trace Calibration and Frozen-Margin Filtering}
\vspace{0.4mm}
\hrule
\vspace{0.7mm}
\textbf{Require:} Native controller $\Pi_N$; held-out calibration tasks
$\mathcal D_{\rm cal}$; target coverage $Q$; request budget $B$; slack
$\varepsilon\ge0$; matched repetitions $R=4$.

\textbf{Offline calibration}

\hspace*{1em}\textbf{for each} task $x\in\mathcal D_{\rm cal}$ \textbf{do}

\hspace*{2em}Run $\Pi_N$ with PAC-CF disabled; record every complete scored
frontier $p$ and its position-balanced mean scores
$s_p(a)=R^{-1}\sum_{r=1}^{R}Y_{p,a,r}$.

\hspace*{2em}After termination, use the verifier to label
$\mathcal P(p)$, the candidates whose subtrees contain a valid completion.

\hspace*{2em}$C_{\rm task}^{N}(x)\leftarrow
\max_{p\in\mathcal E_N(x)}
\left[\max_{a\in\mathcal C(p)}s_p(a)-
\max_{a\in\mathcal P(p)}s_p(a)\right]_+$.

\hspace*{1em}\textbf{end for}

\hspace*{1em}Sort the $n$ nonempty task scores; set
$k_Q\leftarrow\lceil(n+1)Q\rceil$ and freeze
$\widehat L_Q\leftarrow C_{(k_Q)}$.

\textbf{Online deployment}

\hspace*{1em}Initialize Native search and request count $\kappa\leftarrow0$.

\hspace*{1em}\textbf{while} $\kappa<B$ and the task is not terminated \textbf{do}

\hspace*{2em}Let $\Pi_N$ construct its next complete frontier $p_t$ and apply
its Native eligibility checks.

\hspace*{2em}For every $a\in\mathcal C(p_t)$, obtain the same $R=4$
position-balanced evaluator scores and compute
$s_t(a)=R^{-1}\sum_{r=1}^{R}Y_{t,a,r}$.

\hspace*{2em}$s_t^\star\leftarrow\max_{a\in\mathcal C(p_t)}s_t(a)$;
$\mathcal K_t\leftarrow
\{a\in\mathcal C(p_t):s_t^\star-s_t(a)\le\widehat L_Q+\varepsilon\}$.

\hspace*{2em}Prune $\mathcal C(p_t)\setminus\mathcal K_t$ at this occurrence;
return $\mathcal K_t$ and the original scores to $\Pi_N$.

\hspace*{2em}Let $\Pi_N$ perform its unchanged ordering, selection,
execution/rollout, backup, verification, and termination logic; update $\kappa$.

\hspace*{1em}\textbf{end while}

\textbf{Return:} the highest-utility verifier-valid solution found by $\Pi_N$.
\vspace{0.7mm}
\hrule
\end{minipage}
\end{figure}

Algorithm~\ref{alg:pac_cf_main} matches the claim-bearing implementation. It uses
the fixed four-score aggregation and the controller--domain margin frozen before
testing; it does not evaluate $u_{\rm stat}$, estimate the theoretical $L$, or
adaptively resample candidates online. For ToolTree, the Native $\tau_{\rm pre}$
gate is retained and PAC-CF replaces only the subsequent top-$K$ admission step.
All later search allocation remains Native.

\raggedbottom
\section{Experiments}
\label{sec:experiments}
\subsection{Experimental Setup}
\label{subsec:experiment_setup}
We evaluate PAC-CF on Game24 arithmetic reasoning ($N=100$), PlanBench
Blocksworld planning ($N=99$), and GTA structured tool planning ($N=99$ after a
predeclared technical exclusion)
\citep{yao2024tree,valmeekam2023planbench,wang2024gta}. Utility is task success
for Game24/Blocksworld and Avg3 (Tool/Argument/Plan F1) for GTA. Each domain is
paired across UCT-MCTS, ToolChain*, LATS, ToolTree, and LevinTS
\citep{toolchain2024,zhou2024lats,tooltree2026,orseau2018single}. Prompts,
candidate generation, evaluator, verifier, downstream search logic, and request
budget are matched; only Eq.~\eqref{eq:direct_deployment} changes.

We prespecify $Q=.95$ and freeze every controller--domain margin before test-time
evaluation. Budgets are $B\in\{100,200,300,500\}$ physical-equivalent requests per
task. Utility is evaluated on the full paired test set. Workload reduction is measured
on paired non-budget-censored tasks, excluding a pair only when either arm terminates
by hard-budget exhaustion; all-task accounting is reported in
Appendix~\ref{sec:budget_censoring_sensitivity}. Repeated scoring, exact frozen
configurations, calibration ranks, and bootstrap details are in
Appendix~\ref{sec:experiment_details_appendix}--\ref{sec:frozen_config_appendix}.

\par\smallskip
\noindent\begin{minipage}{\linewidth}
\refstepcounter{table}\label{tab:q95_l_main}
\centering
\scriptsize
\textbf{Table \thetable:} Frozen operational margins $\widehat L_{.95}$ on the
common $[0,100]$ score scale, calibrated from Native-trace protected-path deficits.
\vspace{1mm}

\setlength{\tabcolsep}{4.1pt}
\begin{tabular}{cccccc}
\toprule
Domain & UCT-MCTS & ToolChain* & LATS & ToolTree & LevinTS \\
\midrule
Game24      & 30.0 & 25.0 & 30.0 & 20.0 & 50.0 \\
Blocksworld & 45.0 & 50.0 & 62.5 &  0.0 & 10.0 \\
GTA         & 40.0 & 40.0 & 40.0 & 30.0 & 30.0 \\
\bottomrule
\end{tabular}
\end{minipage}
\par\smallskip

Margins are controller--domain specific and remain fixed across all budgets. A zero
margin means that, on protected-exposed calibration tasks, a best protected action
never fell below the frontier maximum by a positive amount. The claim-bearing runs
use \texttt{deepseek-v4-flash}, seed 42, $\varepsilon=0$, and four matched evaluator
observations per candidate at temperature $0.2$; exact $(n,k_Q)$ calibration counts
and rank-based miscoverage bounds are reported in
Appendix~\ref{sec:frozen_config_appendix}.

\subsection{Main results}
\label{subsec:main_results}

\paragraph{Efficiency accounting.}
We report five workload views because pruning changes both model calls and search
structure. For $X\in\{\mathrm{PReq},\mathrm{Graph},\mathrm{Token}\}$, realized
end-to-end reduction on the paired non-budget-censored cohort is
\begin{equation}
R_X^{\rm net}=1-\frac{\sum_{i\in\mathcal I^{\rm NBC}}X_i^{\rm PAC\text{-}CF}}
{\sum_{i\in\mathcal I^{\rm NBC}}X_i^{\rm Native}}.
\label{eq:realized_efficiency_metric}
\end{equation}
Physical requests count provider-call-equivalent requests, Net Graph counts realized
non-root search nodes after adaptive reallocation, and E2E Token is exact
prompt-plus-completion usage. Two additional gross metrics reconstruct the observed
Native subtree and downstream candidate evaluations removed by matched PAC-CF pruning
decisions. Their formal accounting, including unioning overlapping subtrees and never
extrapolating unmatched roots, is given in Appendix~\ref{sec:experiment_details_appendix}.

\par\smallskip
\noindent\begin{minipage}{\linewidth}
\refstepcounter{table}\label{tab:macro_main}
\raggedright
\textbf{Table \thetable:} \textbf{Non-budget-censored realized end-to-end efficiency.}\\[-0.2mm]
\parbox{0.96\linewidth}{\raggedright
Domain rows and the equal-domain macro report all four budgets. Utility uses the full
test set; workload reductions use paired non-budget-censored tasks. Bootstrap intervals
and the corresponding accounting details are in Appendix Table~\ref{tab:domain_macro_ci}.}
\par\vspace{0.7mm}
\centering
\footnotesize
\renewcommand{\arraystretch}{1.01}
\begin{tabular*}{\linewidth}{@{\extracolsep{\fill}}cccccc}
\toprule
Domain & Budget & $\Delta U$ & P-Req. $\downarrow$ & Net Graph $\downarrow$ & E2E Token $\downarrow$ \\
\midrule
Game24 & B100 & $-1.00$ pp & 10.73\% & 3.63\% & 11.41\% \\
       & B200 & $-1.00$ pp & 12.31\% & 6.78\% & 12.76\% \\
       & B300 & $-1.60$ pp & 12.56\% & 7.01\% & 13.08\% \\
       & B500 & $-1.80$ pp & 12.76\% & 7.06\% & 13.35\% \\
Blocksworld & B100 & $+3.03$ pp & 1.69\% & 11.22\% & 2.38\% \\
            & B200 & $+2.22$ pp & 5.04\% & 14.17\% & 5.62\% \\
            & B300 & $+1.41$ pp & 5.43\% & 14.67\% & 6.12\% \\
            & B500 & $+1.21$ pp & 5.74\% & 14.67\% & 6.46\% \\
GTA & B100 & $+1.23$ pp & 8.27\% & 13.90\% & 9.89\% \\
    & B200 & $+1.20$ pp & 9.38\% & 14.81\% & 10.92\% \\
    & B300 & $+1.28$ pp & 10.12\% & 15.46\% & 12.35\% \\
    & B500 & $+1.17$ pp & 10.96\% & 16.09\% & 12.85\% \\
\midrule
3-domain macro & B100 & $+1.09$ pp & 6.90\% & 9.58\% & 7.89\% \\
               & B200 & $+0.81$ pp & 8.91\% & 11.92\% & 9.77\% \\
               & B300 & $+0.37$ pp & 9.37\% & 12.38\% & 10.52\% \\
               & B500 & $+0.19$ pp & 9.82\% & 12.61\% & 10.89\% \\
\bottomrule
\end{tabular*}
\renewcommand{\arraystretch}{1.0}
\normalsize\raggedright
\end{minipage}

\noindent\begin{minipage}{\linewidth}
\refstepcounter{table}\label{tab:main_ci}
\raggedright
\textbf{Table \thetable:} \textbf{Three-domain paired/hierarchical bootstrap 95\% CIs.}
Utility intervals use the full paired test set; workload intervals use paired non-budget-censored rows. The final row is the replacement-matched ToolTree B100 macro.
\par\vspace{0.5mm}
\centering
\scriptsize
\setlength{\tabcolsep}{2.0pt}
\resizebox{0.995\linewidth}{!}{%
\begin{tabular}{ccccc}
\toprule
Budget & $\Delta$ utility [95\% CI] & P-Req. $\downarrow$ [95\% CI] & Net Graph $\downarrow$ [95\% CI] & E2E Token $\downarrow$ [95\% CI] \\
\midrule
B100 & $+1.088$ [0.251, 1.941] & $+6.90\%$ [5.76,7.98] & $+9.58\%$ [8.09,10.98] & $+7.89\%$ [6.74,8.99] \\
B200 & $+0.807$ [0.043, 1.585] & $+8.91\%$ [7.62,10.14] & $+11.92\%$ [10.26,13.44] & $+9.77\%$ [8.45,11.01] \\
B300 & $+0.366$ [$-0.421$, 1.152] & $+9.37\%$ [8.14,10.58] & $+12.38\%$ [10.81,13.87] & $+10.52\%$ [9.12,11.84] \\
B500 & $+0.193$ [$-0.631$, 1.025] & $+9.82\%$ [8.48,11.11] & $+12.61\%$ [10.98,14.15] & $+10.89\%$ [9.46,12.23] \\
\midrule
ToolTree B100 & $+4.381$ [1.43, 7.41] & $+18.95\%$ [14.81,22.96] & $+12.90\%$ [7.54,17.77] & $+23.76\%$ [19.56,27.85] \\
\bottomrule
\end{tabular}}
\normalsize\raggedright
\par\vspace{0.4mm}
\noindent All three realized workload intervals remain strictly above zero at every tested budget in the three-domain macro. Aggregate utility is resolved positive at B100/B200 and unresolved at B300/B500. The B100 efficiency cohort retains 1,058/1,490 matched controller--task pairs (71.0\%); this rises to 1,372/1,490 (92.1\%) at B500 as hard-budget censoring weakens.
\end{minipage}
\par\smallskip
\begin{figure}[H]
\centering
\includegraphics[width=0.90\textwidth]{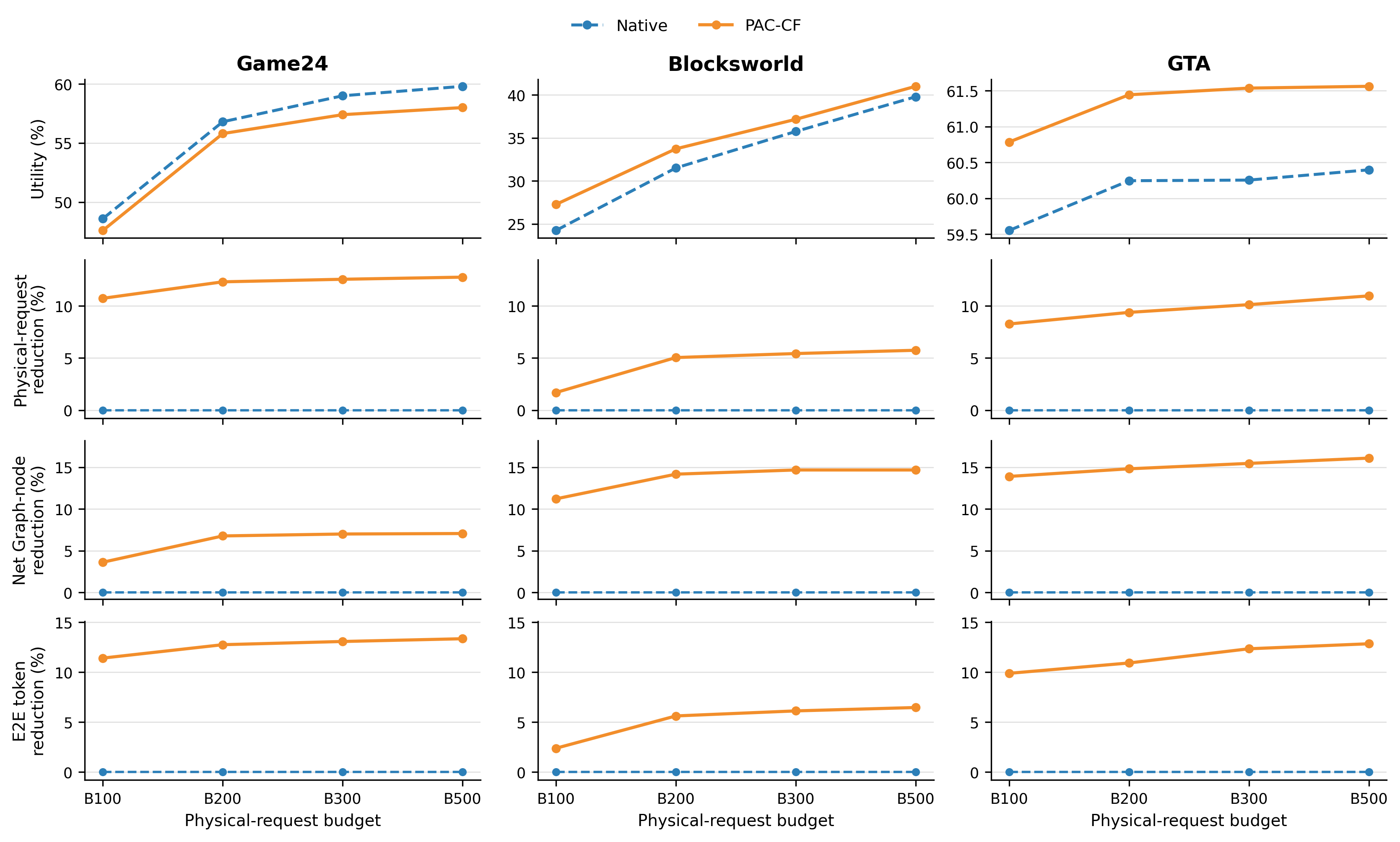}
\caption{\textbf{Native--PAC-CF macro utility and workload reduction.} The top row
compares utility; the remaining rows plot PAC-CF's physical-request, Net Graph-node,
and end-to-end token reductions relative to Native. In each reduction panel, the
blue Native line is therefore $0\%$ by definition; the orange line is PAC-CF's
measured reduction. Each PAC-CF point is the equal-weight mean of controller-level
ratios on the paired non-budget-censored cohort, matching Table~\ref{tab:macro_main};
positive values mean lower workload.}
\label{fig:main_budget_efficiency}
\end{figure}

\begin{table}[!t]
\centering
\scriptsize
\caption{\textbf{B500 matched pruning and realized graph reduction on the paired non-budget-censored cohort.}
Physical Nodes is gross matched-Native subtree suppression; Net Graph is realized
end-to-end reduction after adaptive reallocation. ``Matched roots'' is the fraction
of PAC-CF pruning roots whose Native ancestry is recoverable. GTA is omitted because
complete node ancestry was not recorded.}
\label{tab:gross_subtree_main}
\setlength{\tabcolsep}{3.8pt}
\begin{tabular}{cccccc}
\toprule
Domain & Controller & $N_{\rm eff}$ & Physical Nodes $\downarrow$ & Net Graph $\downarrow$ & Matched roots \\
\midrule
Game24 & UCT-MCTS & 88 & 15.79\% & 15.72\% & 99.66\% \\
 & ToolChain* & 100 & 21.61\% & 19.73\% & 98.96\% \\
 & LATS & 100 & 15.34\% & 15.36\% & 100.00\% \\
 & ToolTree & 100 & 11.42\% & $-21.34\%$ & 81.44\% \\
 & LevinTS & 100 & 7.60\% & 5.85\% & 95.51\% \\
 & \textbf{Macro} & -- & \textbf{14.35\%} & \textbf{7.06\%} & \textbf{95.11\%} \\
\midrule
Blocksworld & UCT-MCTS & 47 & 11.91\% & 8.67\% & 89.47\% \\
 & ToolChain* & 99 & 6.95\% & 3.11\% & 100.00\% \\
 & LATS & 57 & 0.31\% & 0.31\% & 100.00\% \\
 & ToolTree & 99 & 57.18\% & 31.53\% & 56.25\% \\
 & LevinTS & 97 & 30.58\% & 29.74\% & 100.00\% \\
 & \textbf{Macro} & -- & \textbf{21.38\%} & \textbf{14.67\%} & \textbf{89.14\%} \\
\bottomrule
\end{tabular}
\end{table}

Across budgets, the three-domain macro reduces physical requests by $6.90$--$9.82\%$,
Net Graph nodes by $9.58$--$12.61\%$, and E2E tokens by $7.89$--$10.89\%$.
Controller effects are heterogeneous: at B100, Game24 ToolTree trades three success
points for $46.91\%$ fewer requests and $53.11\%$ fewer tokens, whereas GTA ToolTree
gains $7.05$ Avg3 points while reducing requests by $14.24\%$ and Net Graph nodes by
$26.29\%$. Across the three domains, the replacement-matched ToolTree B100 macro
gives $+4.38$ utility points, $18.95\%$ fewer physical requests, $12.90\%$ fewer Net
Graph nodes, and $23.76\%$ fewer tokens. Full controller rows are in
Appendix~\ref{sec:per_backbone_main}.

\subsection{Robustness and heterogeneity}
Removing ToolTree from the equal-controller macro still yields workload reductions:
from B100 to B500, physical requests improve from $3.88\%$ to $7.54\%$, Net Graph
nodes from $8.75\%$ to $12.72\%$, and E2E tokens from $3.93\%$ to $7.72\%$, while the
utility shift moves from $+0.265$ to $-0.855$ points. Thus the efficiency effect is
not ToolTree-specific, although utility consequences remain controller- and
domain-dependent. Larger budgets also weaken hard-budget censoring, increasing the
B100 non-budget-censored cohort from 71.0\% of matched controller--task pairs to
92.1\% at B500; the all-task sensitivity analysis is in
Appendix~\ref{sec:budget_censoring_sensitivity}.

\paragraph{Controlled check at the theoretical boundary.}
Table~\ref{tab:safety_ablation_main} varies the imposed bias level while holding the
controlled search setting fixed. When $\Delta-4L>0$, the certificate remains highly
active; once the sufficient gap closes, pruning falls sharply rather than forcing
deletions unsupported by the interval test. The check is diagnostic rather than a
proof of the $2L$ lower boundary, but it verifies the qualitative mechanism behind
Theorem~\ref{thm:complexity}: repeated evidence can reduce statistical uncertainty
without recreating score separation consumed by persistent bias.

\begin{table}[H]
\centering
\scriptsize
\caption{\textbf{Controlled check around the sufficient separation boundary.}
PCS denotes probability of correct selection in the controlled frontier experiment.}
\label{tab:safety_ablation_main}
\setlength{\tabcolsep}{5.0pt}
\renewcommand{\arraystretch}{0.92}
\begin{tabular}{ccccc}
\toprule
$L$ & $\Delta-4L>0$ & Pruning & UCT PCS & PAC-CF PCS \\
\midrule
$0.05\Delta$ & Yes & 99.1\% & 1.00 & 0.97 \\
$0.15\Delta$ & Yes & 91.3\% & 1.00 & 0.96 \\
$0.25\Delta$ & No & 44.4\% & 0.98 & 0.98 \\
$0.40\Delta$ & No & 8.9\% & 0.42 & 0.98 \\
\bottomrule
\end{tabular}
\renewcommand{\arraystretch}{1.0}
\end{table}

\paragraph{Scope of the guarantees.}
The three evidential layers should be read separately. Theorems~\ref{thm:complexity}
and~\ref{thm:lower_bound} and Proposition~\ref{prop:additive_lower_bound} are
fixed-frontier statements under Assumption~\ref{ass:persistent_bias} and use the
theoretical bias allowance $L$. Proposition~\ref{prop:native_trace_coverage} instead
uses the empirically calibrated margin $\widehat L_Q$ and guarantees simultaneous
protected-path survival only on exchangeable exposed Native traces. The two quantities
are therefore not identified: $\widehat L_Q$ is an operational score-deficit margin,
not an estimator of $L$. Once PAC-CF changes admission, the controller may visit new
frontiers not present on the matched Native trace; utility and workload claims on
those induced trajectories are consequently established by the paired end-to-end
experiments rather than by extending the conformal statement beyond its support.

\section{Conclusion}
PAC-CF treats irreversible pruning as a reliability decision. Bias-aware theory
identifies when score separation supports elimination, Native-Trace calibration
turns held-out path deficits into a finite-sample margin, and deployment uses that
margin without changing the surrounding controller. Across three domains, PAC-CF
reduces request, node, and token workload; on ToolTree B100 it uses $18.95\%$ fewer
physical requests and $23.76\%$ fewer tokens while improving utility by $4.38$
points. PAC-CF thereby connects protected-path reliability to adaptive-search
efficiency. Its finite-sample guarantee is scoped to exchangeable Native traces;
PAC-CF-induced frontiers remain an empirical deployment question.

\bibliographystyle{iclr2027_conference}
\bibliography{references_current}

\section*{AI Use Statement}
Generative AI tools were used for language editing, coding assistance, and analysis
support. The authors independently verified the scientific claims, derivations, code,
experimental artifacts, and reported results and take full responsibility for the
paper.

\appendix
\clearpage
\section{Notation and Parameters}
\label{sec:notation}
This appendix consolidates the notation used in the main text, methodology,
experimental accounting, and theoretical proofs. Equivalent symbols used in
search, analysis, and algorithmic views are grouped in the same row. Proof-local
symbols are marked explicitly. In particular, the theoretical bias bound $L$ is
distinct from the calibrated operational margin $\widehat L_Q$, and the
fixed-frontier failure probability $\delta$ is distinct from conformal
miscoverage $1-Q$.

\begingroup
\footnotesize
\setlength{\tabcolsep}{4pt}
\renewcommand{\arraystretch}{1.10}
\begin{longtable}{p{0.24\linewidth}p{0.68\linewidth}}
\caption{Notation used throughout the paper. Equivalent notation from different
views is grouped to avoid duplicate definitions.}
\label{tab:notation_full}\\
\toprule
\textbf{Symbol} & \textbf{Meaning} \\
\midrule
\endfirsthead
\multicolumn{2}{l}{\textit{Table~\thetable\ continued}}\\
\toprule
\textbf{Symbol} & \textbf{Meaning} \\
\midrule
\endhead
\midrule
\multicolumn{2}{r}{\textit{Continued on next page}}\\
\endfoot
\bottomrule
\endlastfoot

\multicolumn{2}{l}{\textbf{Task and search process}}\\
$x=(z_0,\mathcal A,P,T,\mathcal V,U)$ & Agent task: initial state $z_0$, admissible actions $\mathcal A(z)$, response kernel $P(\cdot\mid z,a)$, transition $T(z,a,o)$, verifier $\mathcal V(h)$, and trajectory utility $U(h)$. \\
$a,\ o$ & Generic action and the resulting reasoning-model, tool, or environment response. \\
$v,\ v_t,\ h(v),\ \ell$ & Search-tree node, node selected at frontier $t$, its partial trajectory/history, and trajectory depth. The history symbol $h$ avoids collision with the proof stopping time $\tau$. \\
$B$ & Per-task request budget. \\
$Q_{\rm MCTS}(v,a)$ & Native MCTS action-value statistic; distinct from conformal coverage $Q$. \\
$p_t,\ \mathcal C(p_t),\ \mathcal A_t$ & The $t$-th scored frontier and its candidate set. In fixed-frontier analysis, $\mathcal A_t\equiv\mathcal C(p_t)$. \\
$M_t,\ a_{t,j}$ & Frontier size $M_t=|\mathcal C(p_t)|$ and candidate $j$ at frontier $t$. \\
$R,\ Y_{t,a,r},\ Y_{m,r}$ & Number of evaluator repetitions and the $r$-th evaluator observation. $Y_{m,r}$ is the fixed-frontier notation for the same type of observation. \\
$s_t(a),\ b_t(a),\ s_t^*$ & Aggregated score, its algorithmic notation $b_t(a)=s_t(a)$, and the best frontier score $s_t^*=\max_a s_t(a)$. \\
$\mathcal K^N(p_t),\ \mathcal K_t$ & Candidate sets admitted by the Native rule and PAC-CF, respectively. \\
$\mathcal P(p)$ & Protected candidates at frontier $p$: actions with at least one verifier-valid completion in their subtrees. \\

\midrule
\multicolumn{2}{l}{\textbf{Fixed-frontier bias-aware analysis}}\\
$m,\ j,\ m^*$ & Candidate/arm indices and the optimal candidate under true downstream value. \\
$\mu_m,\ \mu^*,\ \Delta_m$ & True downstream value, optimum $\mu^*=\max_j\mu_j$, and suboptimality gap $\Delta_m=\mu^*-\mu_m$. \\
$\mathcal F_{r-1},\ \sigma^2$ & History before observation $r$ and conditional sub-Gaussian variance proxy of the centered residual. \\
$L$ & Assumed irreducible-bias bound in Assumption~\ref{ass:persistent_bias}; distinct from the calibrated margin $\widehat L_Q$. \\
$b_m(n),\ b_m(t)$ & Empirical mean after $n$ evaluator samples; at inspected step $t$, $b_m(t):=b_m(n_m(t))$. \\
$u_{\rm stat}(n),\ u_{\rm dist}(n)$ & Statistical radius and bias-robust radius, with $u_{\rm dist}(n)=u_{\rm stat}(n)+L$. \\
$\delta,\ \varepsilon$ & Fixed-frontier failure probability and elimination tolerance/slack. \\
$\Delta_{\rm eff,m}$ & Effective certifiable gap $\Delta_m-4L$; proofs may suppress the candidate index and write $\Delta_{\rm eff}$. \\
$n_m,\ N_m,\ N_m(\tau)$ & Samples allocated to candidate $m$, its elimination sample-complexity bound/scaling, and the realized sample count by proof stopping time $\tau$. \\

\midrule
\multicolumn{2}{l}{\textbf{Native-trace conformal calibration}}\\
$\bar Y(a)$ & Aggregated blind evaluator score for action $a$ on a held-out Native frontier. \\
$\mathcal E_N(x),\ C_{\rm path}(p),\ C_{\rm task}^{N}(x)$ & Protected Native frontiers for task $x$, frontier-level protected-path deficit, and task-level nonconformity (the maximum deficit over $\mathcal E_N(x)$). \\
$n,\ C_{(k)}$ & Number of exchangeable calibration tasks with nonempty protected exposure and the $k$-th order statistic of their task scores. \\
$Q,\ k_Q$ & Target conformal coverage and rank $k_Q=\lceil(n+1)Q\rceil$. \\
$\widehat L_Q$ & Frozen operational score-gap margin $\widehat L_Q=C_{(k_Q)}$; distinct from theoretical bias bound $L$. \\
$X_{\rm new}$ & Future exchangeable task in the Native-trace coverage statement. \\

\midrule
\multicolumn{2}{l}{\textbf{Online PAC-CF deployment}}\\
$\Pi_N,\ K$ & Surrounding Native controller and its fixed-cardinality top-$K$ parameter when applicable. \\
$\kappa,\ n_t(a)$ & Running physical-equivalent request count and number of evaluator observations aggregated for action $a$ at frontier $t$. \\
$r_t(a)$ & Per-candidate interval radius used only in the repeated-evidence analysis; the deployed filter instead applies $\widehat L_Q$ directly as a score-gap margin. \\
$a_t^\star$ & Highest-scored candidate under $b_t$ at frontier $t$. \\
$\tau_{\rm pre}$ & Native ToolTree eligibility gate retained before PAC-CF replaces the subsequent top-$K$ admission step. \\

\midrule
\multicolumn{2}{l}{\textbf{Proof-local notation in Appendix~\ref{subsec:theory}}}\\
$q_{m,r},\ \beta_{m,r},\ \xi_{m,r}$ & Conditional evaluator mean, predictable bias $\beta_{m,r}=q_{m,r}-\mu_m$ with $|\beta_{m,r}|\le L$, and centered residual $\xi_{m,r}=Y_{m,r}-q_{m,r}$. \\
$n_{\min}$ & Minimum sample count in the compared pair, $\min(n_m,n_{m^*})$. \\
$C_1,\ C_2,\ W_{-1}$ & Auxiliary constants and the negative Lambert-$W$ branch used in the upper-bound inversion. \\
$\tau,\ \operatorname{kl}(p,q)$ & Stopping time and binary relative entropy used in lower-bound arguments. \\
$\Delta,\ c,\ g$ & Generic two-child true-value gap, proof-local centering constant, and observable separation $g=\Delta-2L$. \\
$\mathcal I,\ \gamma$ & Set of separated alternatives and the arbitrarily small positive perturbation used in the additive lower bound. \\

\midrule
\multicolumn{2}{l}{\textbf{Experimental workload/accounting notation}}\\
$d,\ c,\ \mathcal I_{d,c,B}^{\rm NBC}$ & Domain and controller indices, and the paired non-budget-censored task subset at request budget $B$. \\
$X,\ R_X^{\rm net}$ & Generic realized workload quantity (e.g., PReq, Graph, Token) and its net reduction from Native to PAC-CF. \\
$\mathrm{Token}_i,\ p_{ij},\ c_{ij}$ & Exact task-level prompt-plus-completion token usage and its operation-level prompt/completion counts. \\
$\mathcal P_i^{M},\ \phi(u),\ \operatorname{Desc}_{N}(\phi(u))$ & PAC-CF-pruned roots matched to the Native lifecycle, the matched Native node, and its Native descendants. \\
$\mathcal S_i$ & Union of matched pruned Native roots and their observed Native descendants for task $i$. \\
$V_i^N,\ E_i^N,\ E_i(\mathcal S_i)$ & Native instantiated nodes, Native scored candidate occurrences, and scored occurrences downstream of the matched pruned subtrees. \\
$R_{\rm PNode}^{\rm gross},\ R_{\rm VNode}^{\rm gross}$ & Gross physical-node and virtual-node reductions reconstructed from matched Native subtrees. \\
$N_{\rm eff},\ \Delta U$ & Effective paired-task count for a workload cell and PAC-CF-minus-Native utility difference. \\
$\theta$ & Offline diagnostic score-gap threshold in Appendix~\ref{sec:zero_api_audit}; distinct from transition operator $T$. \\

\end{longtable}
\endgroup

\clearpage
\section{Method and Protocol Details}
\label{sec:method_details_appendix}

\subsection{Calibration construction and backbone integration}
The held-out Native controller first runs without PAC-CF and without solution or
hindsight information. Each exposed frontier, its candidates, and ordinary
evaluator scores are recorded. After search terminates, verified solution
information labels
\[
\mathcal P(p)=\{a\in\mathcal C(p):\text{the subtree under $a$ contains at least one verifier-valid solution}\}.
\]
Tasks with no protected exposure are recorded as missing rather than assigned a
zero protection score. The operational score is computed from the original blind
scores and these post-hoc labels. The theoretical bias allowance $L$ in
Assumption~\ref{ass:persistent_bias} is not replaced by or identified with the
empirical $\widehat L_Q$.

At test time PAC-CF uses Eq.~\eqref{eq:direct_deployment} only. Candidate
generation, native priority/selection, rollout, backup, termination, and task
verification remain with the surrounding controller. ToolTree is
replacement-matched at its native pre-pruning location: the $\tau_{\rm pre}$
eligibility gate is retained, while its subsequent fixed-cardinality top-$K$ step
is replaced by Eq.~\eqref{eq:direct_deployment}. All other ToolTree logic is unchanged.

\paragraph{Protection target and controller interface.}
The calibration target is existential: on a covered Native-trace task it is enough
that at least one verified continuation survives at each protected frontier, rather
than every viable action. For UCT-MCTS and LATS the filter is applied to the
complete scored child set before native rollout selection; for ToolChain* and LevinTS it is
applied before the native priority/Levin ordering; for ToolTree it replaces only
the top-$K$ admission after $\tau_{\rm pre}$ while post-evaluation and backup remain
unchanged. A candidate removed at one frontier occurrence may still reappear if a
different search path generates it again.

\paragraph{Main-text algorithm.}
Algorithm~\ref{alg:pac_cf_main} gives the complete frontier-level PAC-CF procedure.
This appendix therefore records only calibration, controller integration, and
accounting details rather than repeating the pseudocode.

\section{Experiment Protocol Details}
\label{sec:experiment_details_appendix}

\paragraph{Repeated and position-balanced scoring.}
Each scored candidate receives four evaluator observations. Candidate order is a
deterministic seed-dependent shuffle with paired forward/reverse ordering within
microbatches. The four scores are aggregated before the frontier decision. This
reduces sensitivity to position effects and stochastic variation without assuming
that irreducible bias disappears.

\paragraph{Five-view efficiency accounting.}
The per-task budget is a hard cap on physical-equivalent provider-call accounting
units. A \emph{physical request} is one provider-call-equivalent request. For realized
quantities $X$, reduction is
$1-\sum_i X_i^{\mathrm{PAC}}/\sum_i X_i^{\mathrm{Native}}$ before the declared
controller/domain averaging.

The two gross node metrics are reconstructed from matched Native ancestry. If
$\mathcal P_i^{M}$ is the set of PAC-CF-pruned roots whose action paths can be matched to
Native, then
$\mathcal S_i=\cup_{u\in\mathcal P_i^{M}}(\{\phi(u)\}\cup
\mathrm{Desc}_N(\phi(u)))$. Gross physical-node reduction is
$\sum_i|\mathcal S_i|/\sum_i|V_i^N|$; overlapping subtrees are unioned once. Gross
virtual-node reduction counts scored descendant candidate occurrences in these
subtrees, excluding the directly pruned root because PAC-CF scores it before deciding
to prune. Unmatched roots are not assigned hypothetical descendants. The resulting
metrics therefore quantify observed gross structural suppression and gross downstream
evaluation suppression, respectively. Net graph-node reduction instead compares the
complete Native and PAC-CF trajectories after adaptive reallocation and thus quantifies
realized structural benefit.
The GTA frozen artifacts do not retain complete parent--child ancestry, so gross
subtree metrics are not reconstructed for GTA.

\begin{table}[H]
\centering
\scriptsize
\caption{Interpretation of the five efficiency metrics. Gross node metrics attribute
observed Native subtrees to matched PAC-CF pruning decisions; net metrics compare the
complete realized trajectories after adaptive reallocation.}
\label{tab:six_metric_interpretation}
\setlength{\tabcolsep}{3.0pt}
\begin{tabular}{p{0.20\linewidth}p{0.36\linewidth}p{0.34\linewidth}}
\toprule
Metric & Computation & Interpretable benefit \\
\midrule
Physical requests & Provider-call-equivalent Native vs. PAC-CF totals & Provider/API-call and request-latency benefit \\
Physical nodes & Matched pruned Native roots plus the union of their observed descendants & Gross search-space/subtree suppression benefit \\
Virtual nodes & Scored downstream child-candidate occurrences under those matched Native subtrees & Gross avoided child-evaluation benefit \\
Net graph nodes & Complete PAC-CF vs. Native graph-node totals & Realized structural benefit after search reallocation \\
End-to-end tokens & Exact prompt-plus-completion usage of the complete trajectories & Realized model-token benefit \\
\bottomrule
\end{tabular}
\end{table}

For direct $Q=.95$ conformal calibration, at least 19 protected-exposed tasks are
needed for a finite empirical order statistic. Calibration streams were extended
until every controller reached the required rank; technical failures use common-task
complete-case exclusion without selective redraw or $Q$ retuning. With
$k_{.95}=\lceil .95(n+1)\rceil$, the exact rank-based Native-trace miscoverage
upper bound is $1-k_{.95}/(n+1)$ and may be slightly below $5\%$. Because the
selected order statistic can lie at the sample maximum, we distinguish finite-sample
coverage validity from numerical population-quantile stability.

\paragraph{Bootstrap audit.}
Per-condition utility intervals use 200,000 paired task-bootstrap resamples.
Domain macros use 300,000 resamples with task identities shared across the five
controllers; three-domain and ToolTree summaries use 300,000 independent
within-domain paired resamples followed by equal-domain averaging. Physical-request and
net graph-node reductions are ratios of totals within each resample before controller/domain
averaging. Node point estimates are reconstructed directly from the frozen lifecycle
and replay counters. No new LLM calls are introduced by this audit.

\subsection{Main-text supporting checks}
\label{sec:main_supporting_checks}
The controlled sufficient-boundary check is reported in the main text as
Table~\ref{tab:safety_ablation_main}; it is not duplicated here.

\begin{table}[H]
\centering
\scriptsize
\caption{Formal and empirical evidence map.}
\label{tab:claim_scope_main}
\setlength{\tabcolsep}{3.3pt}
\begin{tabular}{p{0.19\linewidth}p{0.27\linewidth}p{0.45\linewidth}}
\toprule
Layer & Object & Supported statement \\
\midrule
Irreducible-bias theory & $L$ & Sufficient fixed-frontier separation and the $2L/4L$ theoretical regimes under Assumption~\ref{ass:persistent_bias}. \\
Native-trace certificate & $\widehat L_Q$ & At least $Q$ probability of simultaneous protected-path survival on exchangeable exposed Native traces. \\
Adaptive deployment & Matched trajectories & End-to-end utility, physical-request, and net graph-node workload effects on the adaptive trajectory induced by PAC-CF. \\
\bottomrule
\end{tabular}
\end{table}

\clearpage

\section{Zero-API Certificate and Deployment Audit}
\label{sec:zero_api_audit}
All analyses in this section are reconstructed from the frozen claim-bearing
lifecycle, evaluator-tape, oracle-diagnostic, and replay artifacts; no additional
LLM/API calls are made. We keep two estimands separate. The first evaluates the
Native-trace object in Proposition~\ref{prop:native_trace_coverage}. The second
examines the post-intervention adaptive trajectory and is therefore empirical only.

\subsection{Native-trace empirical certificate coverage}
For each reconstructable Game24 controller, sibling candidates and their four saved
evaluator observations are restored from the Native lifecycle/tapes. Exact symbolic
reachability labels the protected set, after which Eq.~\eqref{eq:c_path_compact} is
recomputed task by task. Reconstruction is accepted only when the recovered protected
frontier/action counters agree with the saved oracle diagnostics. ToolTree is reported
as N/A rather than mixing its distinct archived judge channel with the score channel
used by the other controllers.

\begin{table}[H]
\centering
\scriptsize
\caption{Game24 Native-trace empirical certificate audit at the pre-frozen
$Q=.95$ margin. Denominators are protected-exposed test tasks. These empirical
frequencies diagnose the realized test sample; they are not a replacement for the
finite-sample conformal statement.}
\label{tab:zero_api_native_coverage}
\setlength{\tabcolsep}{4.0pt}
\begin{tabular}{lrrrrl}
\toprule
Controller & $\widehat L_{.95}$ & Exposed & Reconstructed & Covered & Coverage \\
\midrule
UCT-MCTS & 30 & 100 & 100 & 97 & $97.00\%$ \\
ToolChain*      & 25 & 97  & 97  & 93 & $95.88\%$ \\
LATS    & 30 & 96  & 95  & 95 & $100\%$ reconstructed$^\dagger$ \\
LevinTS     & 50 & 80  & 80  & 75 & $93.75\%$ \\
ToolTree& 20 & --  & --  & -- & N/A (score-channel ambiguity) \\
\bottomrule
\end{tabular}

\vspace{1mm}
\raggedright\scriptsize $^\dagger$One additional LATS exposed task is left
unresolved. A worst-case sensitivity that counts it as failure is $95/96=98.96\%$.
For LevinTS, observing at most 75 successes out of 80 has probability approximately
$0.371$ under an independent Binomial$(80,.95)$ reference, so the finite test
realization is not in tension with a nominal $.95$ marginal target.
\end{table}

\subsection{Offline threshold diagnostic}
Holding the reconstructed Native traces fixed, define
$K_\theta(p)=\{a:s^*(p)-s(a)\le \theta\}$. Sweeping $\theta$ gives a purely offline diagnostic
between empirical task-level path coverage and pruning aggressiveness. The latter is
the fraction of scored candidate occurrences rejected on the fixed Native traces; it
is not realized end-to-end saving because a changed threshold can alter later
frontiers. Table~\ref{tab:zero_api_threshold_point} reports the pre-frozen operating
point only; no test-set value is used to retune $\theta$.

\begin{table}[H]
\centering
\scriptsize
\caption{Frozen operating points on the Game24 Native-trace coverage--aggressiveness
diagnostic. Protected-action pruning need not be zero because PAC-CF protects the
existence of at least one verifier-valid continuation, not every valid action.}
\label{tab:zero_api_threshold_point}
\setlength{\tabcolsep}{3.5pt}
\begin{tabular}{lrrrr}
\toprule
Controller & $\theta=\widehat L_{.95}$ & Task coverage & Candidate-prune proxy & Protected-action prune \\
\midrule
UCT-MCTS & 30 & $97.00\%$ & $13.27\%$ & $9.69\%$ \\
ToolChain*      & 25 & $95.88\%$ & $12.88\%$ & $12.26\%$ \\
LATS    & 30 & $100\%$ reconstructed & $8.17\%$ & $1.52\%$ \\
LevinTS     & 50 & $93.75\%$ & $4.61\%$ & $7.27\%$ \\
\bottomrule
\end{tabular}
\end{table}

\subsection{Adaptive-trajectory oracle diagnostic}
On the actual PAC-CF trajectory we ask a different question: among controller--task
pairs that expose protected frontiers, did any inspected frontier lose \emph{all}
oracle-valid continuations? This event is downstream of intervention and therefore
is not covered by Proposition~\ref{prop:native_trace_coverage}.

\begin{table}[H]
\centering
\scriptsize
\caption{B100 adaptive-trajectory oracle diagnostic. ``Survival'' means that no
inspected protected frontier loses all oracle-valid actions. Game24 ToolTree uses the
canonical $C=20$ replacement run, not the superseded $C=32.5$ run.}
\label{tab:adaptive_oracle_audit}
\setlength{\tabcolsep}{4.2pt}
\begin{tabular}{llrrr}
\toprule
Domain & Controller & Exposed & No all-protected loss & Survival \\
\midrule
Game24 & UCT-MCTS & 100 & 97 & $97.00\%$ \\
Game24 & ToolChain* & 97 & 94 & $96.91\%$ \\
Game24 & LATS & 96 & 96 & $100\%$ \\
Game24 & ToolTree & 100 & 99 & $99.00\%$ \\
Game24 & LevinTS & 80 & 75 & $93.75\%$ \\
\cmidrule(lr){2-5}
Game24 & pooled & 473 & 461 & $97.46\%$ \\
Blocksworld & pooled five controllers & 495 & 494 & $99.80\%$ \\
\bottomrule
\end{tabular}
\end{table}

\subsection{Excluding-ToolTree robustness}
Because ToolTree contributes a large positive utility effect, we also remove it from
the equal-controller macro while leaving the other four controllers unchanged. The
result shows that realized computation reduction is not a ToolTree-only phenomenon,
whereas utility effects are heterogeneous.

\begin{table}[H]
\centering
\scriptsize
\caption{Three-domain macro excluding ToolTree. Positive reduction means lower
PAC-CF workload.}
\label{tab:exclude_tooltree_macro}
\setlength{\tabcolsep}{3.0pt}
\begin{tabular}{lrrr}
\toprule
Budget & $\Delta$ utility & P-Req. $\downarrow$ & Net Graph $\downarrow$ \\
\midrule
B100 & $+0.265$ & $2.94\%$ & $8.26\%$ \\
B200 & $-0.086$ & $5.26\%$ & $11.07\%$ \\
B300 & $-0.638$ & $6.16\%$ & $11.74\%$ \\
B500 & $-0.855$ & $7.10\%$ & $12.45\%$ \\
\bottomrule
\end{tabular}
\end{table}

\subsection{Realized workload visualization and trajectory example}
\label{sec:workload_vis_appendix}
\begin{figure*}[H]
\centering
\includegraphics[width=0.91\textwidth]{main_budget_efficiency.png}
\caption{Native--PAC-CF macro utility and workload reduction. The top row compares
utility; the remaining rows plot PAC-CF's physical-request, Net Graph-node, and
end-to-end token reductions relative to Native. Workload points use equal-controller
macros on the paired non-budget-censored cohort.}
\label{fig:main_budget_efficiency}
\end{figure*}

\begin{table}[H]
\centering
\scriptsize
\caption{Blocksworld task 121 (UCT-MCTS, B100).}
\label{tab:qual_case_main}
\begin{tabular}{lrrr}
\toprule
 & Native & PAC-CF & Reduction \\
\midrule
Outcome & Fail & Success & -- \\
Physical requests & 100 & 42 & $58.00\%$ \\
Expanded nodes & 6 & 3 & $50.00\%$ \\
\bottomrule
\end{tabular}
\end{table}

\section{Frozen Configuration and Evaluation Scope}
\label{sec:frozen_config_appendix}

\begin{table}[H]
\centering
\small
\caption{Frozen implementation and evaluation configuration. These settings are
fixed across the matched Native/PAC-CF comparisons unless a row explicitly
states otherwise.}
\label{tab:frozen_config}
\setlength{\tabcolsep}{5pt}
\begin{tabular}{p{0.27\linewidth}p{0.64\linewidth}}
\toprule
\textbf{Item} & \textbf{Frozen setting} \\
\midrule
Controllers & UCT-MCTS, ToolChain*, LATS, ToolTree, LevinTS \\
Provider model & \texttt{deepseek-v4-flash}; seed 42 \\
Budgets & $B\in\{100,200,300,500\}$ physical-equivalent requests per task \\
Main calibration & Domain--controller-specific direct conformal, $Q=0.95$; deployed $\varepsilon=0$ \\
Frontier scoring & Four evaluator observations per candidate; matched evaluator temperature $0.2$; deterministic seed-dependent shuffle with paired forward/reverse order \\
Calibration labels & Post-hoc exact verifier / verified-solution information; never exposed to online search \\
Online intervention & Eq.~\eqref{eq:direct_deployment} only; native generator, downstream priority, rollout/backup, termination, and verifier unchanged \\
Efficiency accounting & Five views: physical requests, gross physical/virtual nodes, net graph nodes, and end-to-end prompt-plus-completion tokens; workload metrics use paired non-budget-censored rows, with all-task sensitivity reported separately \\
Primary paired analysis sets & Utility: Game24 $N=100$, Blocksworld/GTA $N=99$; workload: paired non-budget-censored subset in each domain--controller--budget cell \\
\bottomrule
\end{tabular}
\end{table}

The released CSVs retain the internal controller identifiers
\texttt{vanilla}, \texttt{astar}, and \texttt{lts} for trace compatibility;
they correspond to UCT-MCTS, ToolChain*, and LevinTS, respectively. The identifiers
\texttt{lats} and \texttt{tooltree} already match their public names.

The primary macro evaluation uses Game24, Blocksworld, and GTA.

\paragraph{Calibration rank and exposure audit.}
Table~\ref{tab:calibration_rank_audit} reports the protected-exposed task count
$n$, conformal rank $k_{.95}$, frozen margin, and exact rank-based miscoverage
upper bound $1-k/(n+1)$. Game24 uses an 80-task calibration pool before
protected-exposure filtering; GTA uses 50. The replacement-matched Game24
ToolTree certificate retains the same exposed count/rank as its frozen stream but
uses the corrected margin 20. Blocksworld's frozen calibration uses 50
protected-exposed tasks in each controller cell, giving $k=49$ and a $3.92\%$
rank bound. Tasks without protected exposure are missing, not zero-scored.

\begin{table}[H]
\centering
\scriptsize
\caption{Frozen $Q=.95$ calibration-rank audit. $n$ counts protected-exposed
calibration tasks; Bound is $1-k/(n+1)$.}
\label{tab:calibration_rank_audit}
\setlength{\tabcolsep}{3.0pt}
\begin{tabular}{llrrrr}
\toprule
Domain & Controller & $n$ & $k_{.95}$ & $\widehat L_{.95}$ & Bound \\
\midrule
Game24 & UCT-MCTS  & 79 & 76 & 30.0 & 5.00\% \\
        & ToolChain*       & 52 & 51 & 25.0 & 3.77\% \\
        & LATS     & 45 & 44 & 30.0 & 4.35\% \\
        & ToolTree & 79 & 76 & 20.0 & 5.00\% \\
        & LevinTS      & 30 & 30 & 50.0 & 3.23\% \\
\midrule
Blocksworld & UCT-MCTS  & 50 & 49 & 45.0 & 3.92\% \\
            & ToolChain*       & 50 & 49 & 50.0 & 3.92\% \\
            & LATS     & 50 & 49 & 62.5 & 3.92\% \\
            & ToolTree & 50 & 49 &  0.0 & 3.92\% \\
            & LevinTS      & 50 & 49 & 10.0 & 3.92\% \\
\midrule
GTA & UCT-MCTS  & 43 & 42 & 40.0 & 4.55\% \\
    & ToolChain*       & 41 & 40 & 40.0 & 4.76\% \\
    & LATS     & 42 & 41 & 40.0 & 4.65\% \\
    & ToolTree & 42 & 41 & 30.0 & 4.65\% \\
    & LevinTS      & 41 & 40 & 30.0 & 4.76\% \\
\bottomrule
\end{tabular}
\end{table}

\clearpage
\section{Per-Backbone Main Results}
\label{sec:per_backbone_main}
Table~\ref{tab:per_backbone_main} retains every backbone behind the macro averages in
Table~\ref{tab:macro_main}. Utility deltas are PAC-CF minus Native; positive
reductions mean lower PAC-CF workload.
\begin{table*}[ht]
\centering
\scriptsize
\caption{Per-backbone frozen-$Q95$ results at B100 and B500. Utility deltas use the full test set; $N_{\rm eff}$ and workload reductions use paired non-budget-censored rows.}
\label{tab:per_backbone_main}
\setlength{\tabcolsep}{1.2pt}
\resizebox{\textwidth}{!}{%
\begin{tabular}{llrrrrrrrrrr}
\toprule
Domain & Backbone & \multicolumn{5}{c}{B100} & \multicolumn{5}{c}{B500} \\
\cmidrule(lr){3-7}\cmidrule(lr){8-12}
& & $\Delta U$ & $N_{\rm eff}$ & PReq & NetG & Token & $\Delta U$ & $N_{\rm eff}$ & PReq & NetG & Token \\
\midrule
Game24 & UCT-MCTS & +1.00 & 75 & +2.07\% & +11.64\% & +1.81\% & +0.00 & 88 & +2.41\% & +15.72\% & +2.24\% \\
 & ToolChain* & +0.00 & 57 & +4.80\% & +16.21\% & +2.53\% & -2.00 & 100 & +8.69\% & +19.73\% & +6.38\% \\
 & LATS & +0.00 & 68 & +0.00\% & +4.49\% & +0.00\% & +0.00 & 100 & +4.81\% & +15.36\% & +4.77\% \\
 & ToolTree & -3.00 & 99 & +46.91\% & -19.13\% & +53.11\% & -3.00 & 100 & +46.89\% & -21.34\% & +52.53\% \\
 & LevinTS & -3.00 & 61 & -0.12\% & +4.95\% & -0.38\% & -4.00 & 100 & +1.01\% & +5.85\% & +0.85\% \\
\midrule
Blocksworld & UCT-MCTS & +1.01 & 30 & +2.26\% & +8.53\% & +2.29\% & +3.03 & 47 & +5.16\% & +8.67\% & +4.87\% \\
 & ToolChain* & +2.02 & 69 & +4.14\% & +3.68\% & +4.01\% & +0.00 & 99 & +3.59\% & +3.11\% & +3.34\% \\
 & LATS & +0.00 & 28 & +0.00\% & +1.59\% & +0.00\% & -1.01 & 57 & +0.00\% & +0.31\% & +0.00\% \\
 & ToolTree & +9.09 & 99 & -4.30\% & +31.53\% & +0.04\% & +9.09 & 99 & -4.30\% & +31.53\% & +0.04\% \\
 & LevinTS & +3.03 & 61 & +6.34\% & +10.76\% & +5.57\% & -5.05 & 97 & +24.26\% & +29.74\% & +24.06\% \\
\midrule
GTA & UCT-MCTS & +0.21 & 73 & +22.55\% & +34.27\% & +25.91\% & +0.03 & 93 & +33.85\% & +41.67\% & +38.82\% \\
 & ToolChain* & +0.00 & 85 & +0.94\% & +1.69\% & +1.46\% & +0.00 & 97 & +0.92\% & +1.59\% & +1.23\% \\
 & LATS & -0.32 & 68 & +0.00\% & +0.00\% & +0.00\% & -0.31 & 97 & +1.34\% & +3.38\% & +1.47\% \\
 & ToolTree & +7.05 & 99 & +14.24\% & +26.29\% & +18.14\% & +7.05 & 99 & +14.24\% & +26.29\% & +18.14\% \\
 & LevinTS & -0.77 & 86 & +3.60\% & +7.24\% & +3.93\% & -0.95 & 99 & +4.46\% & +7.52\% & +4.59\% \\
\bottomrule
\end{tabular}}
\end{table*}

\section{Raw Main-Experiment Budget Data}
Tables~\ref{tab:raw_game24}--\ref{tab:raw_gta} report full-test utility together with the five workload views on the paired non-budget-censored cohort. The explicit $N_{\rm eff}$ column makes the conditioning strength visible. Gross ancestry-based results are reported separately below.
\begin{table*}[ht]
\centering
\scriptsize
\caption{Game24 true-budget replay data ($N=100$). Utility uses the full test set; $N_{\rm eff}$ and workload reductions use paired non-budget-censored rows. ``Tok. N/P (M)'' gives absolute Native/PAC-CF prompt-plus-completion token totals in millions on the same efficiency cohort.}
\label{tab:raw_game24}
\setlength{\tabcolsep}{1.3pt}
\resizebox{\textwidth}{!}{%
\begin{tabular}{lllrrrrrrr}
\toprule
Budget & Backbone & $N_{\rm eff}$ & Native succ. & PAC-CF succ. & $\Delta$ & PReq $\downarrow$ & NetG $\downarrow$ & Tok. N/P (M) & Token $\downarrow$ \\
\midrule
B100 & UCT-MCTS & 75 & 75 & 76 & +1 & +2.07\% & +11.64\% & 3.37/3.31 & +1.81\% \\
B100 & ToolChain* & 57 & 53 & 53 & +0 & +4.80\% & +16.21\% & 1.67/1.62 & +2.53\% \\
B100 & LATS & 68 & 68 & 68 & +0 & +0.00\% & +4.49\% & 1.84/1.84 & +0.00\% \\
B100 & ToolTree & 99 & 6 & 3 & -3 & +46.91\% & -19.13\% & 5.02/2.36 & +53.11\% \\
B100 & LevinTS & 61 & 41 & 38 & -3 & -0.12\% & +4.95\% & 1.46/1.46 & -0.38\% \\
\midrule
B200 & UCT-MCTS & 87 & 87 & 87 & +0 & +2.54\% & +15.70\% & 6.22/6.07 & +2.35\% \\
B200 & ToolChain* & 77 & 63 & 65 & +2 & +8.24\% & +19.94\% & 2.91/2.75 & +5.43\% \\
B200 & LATS & 86 & 74 & 74 & +0 & +3.11\% & +13.89\% & 3.37/3.27 & +2.85\% \\
B200 & ToolTree & 100 & 6 & 3 & -3 & +46.89\% & -21.34\% & 5.13/2.44 & +52.53\% \\
B200 & LevinTS & 98 & 54 & 50 & -4 & +0.79\% & +5.68\% & 3.58/3.56 & +0.63\% \\
\midrule
B300 & UCT-MCTS & 88 & 88 & 88 & +0 & +2.41\% & +15.72\% & 6.53/6.39 & +2.24\% \\
B300 & ToolChain* & 94 & 71 & 70 & -1 & +9.12\% & +20.90\% & 4.64/4.34 & +6.56\% \\
B300 & LATS & 92 & 76 & 76 & +0 & +3.36\% & +13.90\% & 4.29/4.15 & +3.22\% \\
B300 & ToolTree & 100 & 6 & 3 & -3 & +46.89\% & -21.34\% & 5.13/2.44 & +52.53\% \\
B300 & LevinTS & 100 & 54 & 50 & -4 & +1.01\% & +5.85\% & 3.79/3.76 & +0.85\% \\
\midrule
B500 & UCT-MCTS & 88 & 88 & 88 & +0 & +2.41\% & +15.72\% & 6.53/6.39 & +2.24\% \\
B500 & ToolChain* & 100 & 74 & 72 & -2 & +8.69\% & +19.73\% & 5.53/5.18 & +6.38\% \\
B500 & LATS & 100 & 77 & 77 & +0 & +4.81\% & +15.36\% & 5.93/5.64 & +4.77\% \\
B500 & ToolTree & 100 & 6 & 3 & -3 & +46.89\% & -21.34\% & 5.13/2.44 & +52.53\% \\
B500 & LevinTS & 100 & 54 & 50 & -4 & +1.01\% & +5.85\% & 3.79/3.76 & +0.85\% \\
\bottomrule
\end{tabular}}
\end{table*}

\begin{table*}[ht]
\centering
\scriptsize
\caption{Blocksworld true-budget replay data ($N=99$). Utility uses the full test set; $N_{\rm eff}$ and workload reductions use paired non-budget-censored rows. ``Tok. N/P (M)'' gives absolute Native/PAC-CF prompt-plus-completion token totals in millions on the same efficiency cohort.}
\label{tab:raw_block}
\setlength{\tabcolsep}{1.3pt}
\resizebox{\textwidth}{!}{%
\begin{tabular}{lllrrrrrrr}
\toprule
Budget & Backbone & $N_{\rm eff}$ & Native succ. & PAC-CF succ. & $\Delta$ & PReq $\downarrow$ & NetG $\downarrow$ & Tok. N/P (M) & Token $\downarrow$ \\
\midrule
B100 & UCT-MCTS & 30 & 30 & 31 & +1 & +2.26\% & +8.53\% & 0.21/0.21 & +2.29\% \\
B100 & ToolChain* & 69 & 27 & 29 & +2 & +4.14\% & +3.68\% & 1.03/0.99 & +4.01\% \\
B100 & LATS & 28 & 28 & 28 & +0 & +0.00\% & +1.59\% & 0.51/0.51 & +0.00\% \\
B100 & ToolTree & 99 & 14 & 23 & +9 & -4.30\% & +31.53\% & 1.80/1.80 & +0.04\% \\
B100 & LevinTS & 61 & 21 & 24 & +3 & +6.34\% & +10.76\% & 0.79/0.75 & +5.57\% \\
\midrule
B200 & UCT-MCTS & 38 & 38 & 38 & +0 & +13.11\% & +18.29\% & 0.58/0.51 & +12.22\% \\
B200 & ToolChain* & 91 & 38 & 38 & +0 & +3.92\% & +3.29\% & 1.92/1.85 & +3.69\% \\
B200 & LATS & 37 & 37 & 37 & +0 & +0.00\% & +0.97\% & 1.06/1.06 & +0.00\% \\
B200 & ToolTree & 99 & 14 & 23 & +9 & -4.30\% & +31.53\% & 1.80/1.80 & +0.04\% \\
B200 & LevinTS & 82 & 29 & 31 & +2 & +12.50\% & +16.76\% & 1.61/1.41 & +12.14\% \\
\midrule
B300 & UCT-MCTS & 42 & 42 & 43 & +1 & +8.59\% & +13.03\% & 0.88/0.81 & +8.12\% \\
B300 & ToolChain* & 96 & 40 & 40 & +0 & +3.58\% & +3.10\% & 2.27/2.20 & +3.34\% \\
B300 & LATS & 45 & 45 & 45 & +0 & +0.00\% & +0.60\% & 1.87/1.87 & +0.00\% \\
B300 & ToolTree & 99 & 14 & 23 & +9 & -4.30\% & +31.53\% & 1.80/1.80 & +0.04\% \\
B300 & LevinTS & 93 & 36 & 33 & -3 & +19.27\% & +25.10\% & 2.34/1.89 & +19.12\% \\
\midrule
B500 & UCT-MCTS & 47 & 47 & 50 & +3 & +5.16\% & +8.67\% & 1.48/1.41 & +4.87\% \\
B500 & ToolChain* & 99 & 40 & 40 & +0 & +3.59\% & +3.11\% & 2.60/2.51 & +3.34\% \\
B500 & LATS & 57 & 58 & 57 & -1 & +0.00\% & +0.31\% & 4.15/4.15 & +0.00\% \\
B500 & ToolTree & 99 & 14 & 23 & +9 & -4.30\% & +31.53\% & 1.80/1.80 & +0.04\% \\
B500 & LevinTS & 97 & 38 & 33 & -5 & +24.26\% & +29.74\% & 2.80/2.12 & +24.06\% \\
\bottomrule
\end{tabular}}
\end{table*}

\begin{table*}[ht]
\centering
\scriptsize
\caption{GTA true-budget replay data ($N=99$). Utility uses the full test set; $N_{\rm eff}$ and workload reductions use paired non-budget-censored rows. ``Tok. N/P (M)'' gives absolute Native/PAC-CF prompt-plus-completion token totals in millions on the same efficiency cohort.}
\label{tab:raw_gta}
\setlength{\tabcolsep}{1.3pt}
\resizebox{\textwidth}{!}{%
\begin{tabular}{lllrrrrrrr}
\toprule
Budget & Backbone & $N_{\rm eff}$ & Native Avg3 & PAC-CF Avg3 & $\Delta$ pp & PReq $\downarrow$ & NetG $\downarrow$ & Tok. N/P (M) & Token $\downarrow$ \\
\midrule
B100 & UCT-MCTS & 73 & 0.637915 & 0.639977 & +0.206 & +22.55\% & +34.27\% & 2.48/1.84 & +25.91\% \\
B100 & ToolChain* & 85 & 0.614423 & 0.614423 & +0.000 & +0.94\% & +1.69\% & 3.48/3.43 & +1.46\% \\
B100 & LATS & 68 & 0.631432 & 0.628233 & -0.320 & +0.00\% & +0.00\% & 3.78/3.78 & +0.00\% \\
B100 & ToolTree & 99 & 0.473482 & 0.544014 & +7.053 & +14.24\% & +26.29\% & 1.57/1.28 & +18.14\% \\
B100 & LevinTS & 86 & 0.620299 & 0.612611 & -0.769 & +3.60\% & +7.24\% & 2.81/2.70 & +3.93\% \\
\midrule
B200 & UCT-MCTS & 86 & 0.644396 & 0.646352 & +0.196 & +27.07\% & +36.93\% & 5.63/3.92 & +30.28\% \\
B200 & ToolChain* & 93 & 0.627145 & 0.627145 & +0.000 & +1.12\% & +1.98\% & 4.54/4.46 & +1.62\% \\
B200 & LATS & 86 & 0.637079 & 0.634017 & -0.306 & +0.59\% & +2.04\% & 7.51/7.47 & +0.58\% \\
B200 & ToolTree & 99 & 0.473482 & 0.544014 & +7.053 & +14.24\% & +26.29\% & 1.57/1.28 & +18.14\% \\
B200 & LevinTS & 95 & 0.630148 & 0.620697 & -0.945 & +3.88\% & +6.81\% & 4.33/4.16 & +4.00\% \\
\midrule
B300 & UCT-MCTS & 90 & 0.640318 & 0.646466 & +0.615 & +29.26\% & +37.92\% & 7.54/4.85 & +35.67\% \\
B300 & ToolChain* & 96 & 0.629579 & 0.629579 & +0.000 & +0.99\% & +1.71\% & 5.50/5.42 & +1.34\% \\
B300 & LATS & 93 & 0.637774 & 0.634712 & -0.306 & +1.33\% & +3.23\% & 10.37/10.20 & +1.58\% \\
B300 & ToolTree & 99 & 0.473482 & 0.544014 & +7.053 & +14.24\% & +26.29\% & 1.57/1.28 & +18.14\% \\
B300 & LevinTS & 98 & 0.631551 & 0.622100 & -0.945 & +4.79\% & +8.14\% & 5.32/5.06 & +5.01\% \\
\midrule
B500 & UCT-MCTS & 93 & 0.647885 & 0.648149 & +0.026 & +33.85\% & +41.67\% & 9.05/5.54 & +38.82\% \\
B500 & ToolChain* & 97 & 0.628874 & 0.628874 & +0.000 & +0.92\% & +1.59\% & 5.99/5.92 & +1.23\% \\
B500 & LATS & 97 & 0.637774 & 0.634712 & -0.306 & +1.34\% & +3.38\% & 13.19/12.99 & +1.47\% \\
B500 & ToolTree & 99 & 0.473482 & 0.544014 & +7.053 & +14.24\% & +26.29\% & 1.57/1.28 & +18.14\% \\
B500 & LevinTS & 99 & 0.631834 & 0.622383 & -0.945 & +4.46\% & +7.52\% & 5.81/5.54 & +4.59\% \\
\bottomrule
\end{tabular}}
\end{table*}

\clearpage
\section{Budget-Censoring Sensitivity and Gross-Subtree Accounting}
\label{sec:budget_censoring_sensitivity}
Hard-budget exhaustion truncates the observed trajectory at the experiment cap rather than at its natural termination. We therefore use the paired non-budget-censored cohort for the primary workload interpretation, while retaining the original all-task accounting below. The qualitative conclusion does not depend on this choice: all-task reductions remain positive in the three-domain macro and the non-budget-censored analysis changes magnitude rather than sign.

\begin{table}[H]
\centering
\scriptsize
\caption{Three-domain workload sensitivity: original all-task accounting versus paired non-budget-censored (NBC) accounting. Values are equal-domain/equal-controller macro reductions in percent.}
\label{tab:budget_censor_sensitivity}
\setlength{\tabcolsep}{2.1pt}
\resizebox{\linewidth}{!}{%
\begin{tabular}{lrrrrrr}
\toprule
& \multicolumn{2}{c}{PReq} & \multicolumn{2}{c}{NetG} & \multicolumn{2}{c}{Token} \\
Budget & All & NBC & All & NBC & All & NBC \\
\midrule
B100 & 6.14 & 6.90 & 9.03 & 9.58 & 6.84 & 7.89 \\
B200 & 8.00 & 8.91 & 11.29 & 11.92 & 8.87 & 9.77 \\
B300 & 8.72 & 9.37 & 11.82 & 12.38 & 9.59 & 10.52 \\
B500 & 9.47 & 9.82 & 12.40 & 12.61 & 10.25 & 10.89 \\
\bottomrule
\end{tabular}}
\end{table}

\begin{table}[H]
\centering
\scriptsize
\caption{Gross matched-Native subtree macros on the paired non-budget-censored cohort. GTA is unavailable because complete node ancestry was not recorded. $N_{\rm eff}$ is summed over the five controllers.}
\label{tab:gross_all_budgets}
\setlength{\tabcolsep}{4.2pt}
\begin{tabular}{llrrr}
\toprule
Domain & Budget & $N_{\rm eff}$ & Gross Physical Nodes $\downarrow$ & Gross Virtual Nodes $\downarrow$ \\
\midrule
Game24 & B100 & 360 & 11.05\% & 2.43\% \\
Game24 & B200 & 448 & 14.48\% & 4.56\% \\
Game24 & B300 & 474 & 14.38\% & 4.54\% \\
Game24 & B500 & 488 & 14.35\% & 4.62\% \\
\midrule
Blocksworld & B100 & 287 & 17.59\% & 7.22\% \\
Blocksworld & B200 & 347 & 21.38\% & 10.74\% \\
Blocksworld & B300 & 375 & 21.39\% & 11.14\% \\
Blocksworld & B500 & 399 & 21.38\% & 11.53\% \\
\bottomrule
\end{tabular}
\end{table}

\clearpage
\section{Task-Level Paired Statistical Audit}
\label{sec:statistical_audit}

All statistics in this section are offline re-analyses of the frozen task-level
artifacts; they introduce no new LLM or API calls. The primary audit uses
Game24 $N=100$ paired tasks and Blocksworld/GTA $N=99$ paired tasks.
Per-condition utility intervals use
200,000 paired nonparametric task-bootstrap resamples. Domain-macro intervals
use 300,000 resamples with task identities shared across the five controllers
before equal-controller averaging. Three-domain and strong-backbone summaries
use 300,000 independent within-domain paired resamples followed by
equal-domain averaging. Workload reductions are ratios of totals over paired non-budget-censored rows inside
each resample before controller/domain averaging; end-to-end token usage is resampled in
the same way. Utility remains on the full task set. Rescue/regression/tie (R/G/T) counts compare PAC-CF and Native
task-level utility, with exact ties defined at numerical tolerance $10^{-12}$.

\begin{table*}[ht]
\centering
\scriptsize
\caption{Paired bootstrap domain macros. Utility intervals use the full paired test set; workload intervals use paired non-budget-censored rows. Positive reduction means lower PAC-CF workload.}
\label{tab:domain_macro_ci}
\setlength{\tabcolsep}{1.6pt}
\resizebox{\textwidth}{!}{%
\begin{tabular}{llcccc}
\toprule
Domain & Budget & $\Delta$ utility [95\% CI] & PReq [95\% CI] & NetG [95\% CI] & Token [95\% CI] \\
\midrule
Game24 & B100 & $-1.00$ [-2.60,0.60] & $+10.73\%$ [8.92,12.34] & $+3.63\%$ [0.40,6.50] & $+11.41\%$ [9.66,12.95] \\
Game24 & B200 & $-1.00$ [-2.60,0.40] & $+12.31\%$ [10.58,13.90] & $+6.78\%$ [3.26,9.87] & $+12.76\%$ [11.08,14.27] \\
Game24 & B300 & $-1.60$ [-3.00,-0.40] & $+12.56\%$ [10.98,14.01] & $+7.01\%$ [3.71,9.91] & $+13.08\%$ [11.50,14.50] \\
Game24 & B500 & $-1.80$ [-3.20,-0.60] & $+12.76\%$ [11.19,14.22] & $+7.06\%$ [3.80,9.95] & $+13.35\%$ [11.76,14.81] \\
\midrule
Blocksworld & B100 & $+3.03$ [1.41,4.85] & $+1.69\%$ [-0.73,4.11] & $+11.22\%$ [9.06,13.46] & $+2.38\%$ [0.07,4.70] \\
Blocksworld & B200 & $+2.22$ [0.81,3.64] & $+5.04\%$ [2.06,7.97] & $+14.17\%$ [11.58,16.70] & $+5.62\%$ [2.72,8.45] \\
Blocksworld & B300 & $+1.41$ [-0.20,3.03] & $+5.43\%$ [2.70,8.20] & $+14.67\%$ [12.28,17.13] & $+6.12\%$ [3.46,8.83] \\
Blocksworld & B500 & $+1.21$ [-0.61,3.23] & $+5.74\%$ [2.92,8.59] & $+14.67\%$ [12.25,17.17] & $+6.46\%$ [3.71,9.24] \\
\midrule
GTA & B100 & $+1.23$ [0.33,2.24] & $+8.27\%$ [6.75,9.75] & $+13.90\%$ [11.74,16.02] & $+9.89\%$ [8.04,11.67] \\
GTA & B200 & $+1.20$ [0.29,2.20] & $+9.38\%$ [7.63,11.05] & $+14.81\%$ [12.49,17.07] & $+10.92\%$ [8.84,12.89] \\
GTA & B300 & $+1.28$ [0.37,2.29] & $+10.12\%$ [8.22,11.97] & $+15.46\%$ [13.05,17.81] & $+12.35\%$ [9.61,14.95] \\
GTA & B500 & $+1.17$ [0.26,2.17] & $+10.96\%$ [8.58,13.23] & $+16.09\%$ [13.36,18.74] & $+12.85\%$ [9.98,15.45] \\
\bottomrule
\end{tabular}}
\end{table*}

\begin{table}[H]
\centering
\scriptsize
\caption{Independent three-domain summary for the already pruning-aware ToolTree backbone. Utility uses the full test set; workload intervals use paired non-budget-censored rows.}
\label{tab:strong_backbone_ci}
\setlength{\tabcolsep}{1.8pt}
\resizebox{\linewidth}{!}{%
\begin{tabular}{lcccc}
\toprule
Budget & $\Delta$ utility [95\% CI] & PReq [95\% CI] & NetG [95\% CI] & Token [95\% CI] \\
\midrule
B100 & $+4.38$ [1.43,7.41] & $+18.95\%$ [14.81,22.96] & $+12.90\%$ [7.54,17.77] & $+23.76\%$ [19.56,27.85] \\
B200 & $+4.38$ [1.43,7.41] & $+18.94\%$ [14.84,22.93] & $+12.16\%$ [6.62,17.20] & $+23.57\%$ [19.39,27.65] \\
B300 & $+4.38$ [1.43,7.40] & $+18.94\%$ [14.84,22.93] & $+12.16\%$ [6.67,17.19] & $+23.57\%$ [19.39,27.65] \\
B500 & $+4.38$ [1.44,7.41] & $+18.94\%$ [14.83,22.94] & $+12.16\%$ [6.66,17.20] & $+23.57\%$ [19.40,27.65] \\
\bottomrule
\end{tabular}}
\end{table}

\subsection{Per-Condition Paired Utility Intervals}
\label{sec:per_condition_ci}

Table~\ref{tab:per_condition_ci} reports the primary domain--controller--budget
paired utility intervals. A rescue has larger PAC-CF task-level utility, a
regression has smaller utility, and a tie has equal utility.

{\scriptsize
\setlength{\tabcolsep}{4pt}
\begin{longtable}{lllrrrr}
\caption{Per-domain--controller--budget paired utility deltas, 200,000-resample
paired-bootstrap 95\% CIs, and rescue/regression/tie counts.}
\label{tab:per_condition_ci}\\
\toprule
Domain & Budget & Backbone & $\Delta$ pp & 95\% CI & R/G/T & $N$ \\
\midrule
\endfirsthead
\multicolumn{7}{c}{\tablename\ \thetable\ (continued)}\\
\toprule
Domain & Budget & Backbone & $\Delta$ pp & 95\% CI & R/G/T & $N$ \\
\midrule
\endhead
\midrule
\multicolumn{7}{r}{Continued on next page}\\
\endfoot
\bottomrule
\endlastfoot
Game24 & B100 & UCT-MCTS & $+1.00$ & $[0.00,3.00]$ & 100 \\
Game24 & B100 & ToolChain* & $+0.00$ & $[-4.00,4.00]$ & 100 \\
Game24 & B100 & LATS & $+0.00$ & $[0.00,0.00]$ & 100 \\
Game24 & B100 & ToolTree & $-3.00$ & $[-8.00,1.00]$ & 100 \\
Game24 & B100 & LevinTS & $-3.00$ & $[-8.00,2.00]$ & 100 \\
Game24 & B200 & UCT-MCTS & $+0.00$ & $[0.00,0.00]$ & 100 \\
Game24 & B200 & ToolChain* & $+2.00$ & $[-3.00,7.00]$ & 100 \\
Game24 & B200 & LATS & $+0.00$ & $[0.00,0.00]$ & 100 \\
Game24 & B200 & ToolTree & $-3.00$ & $[-8.00,1.00]$ & 100 \\
Game24 & B200 & LevinTS & $-4.00$ & $[-8.00,-1.00]$ & 100 \\
Game24 & B300 & UCT-MCTS & $+0.00$ & $[0.00,0.00]$ & 100 \\
Game24 & B300 & ToolChain* & $-1.00$ & $[-5.00,2.00]$ & 100 \\
Game24 & B300 & LATS & $+0.00$ & $[0.00,0.00]$ & 100 \\
Game24 & B300 & ToolTree & $-3.00$ & $[-8.00,1.00]$ & 100 \\
Game24 & B300 & LevinTS & $-4.00$ & $[-8.00,-1.00]$ & 100 \\
Game24 & B500 & UCT-MCTS & $+0.00$ & $[0.00,0.00]$ & 100 \\
Game24 & B500 & ToolChain* & $-2.00$ & $[-5.00,0.00]$ & 100 \\
Game24 & B500 & LATS & $+0.00$ & $[0.00,0.00]$ & 100 \\
Game24 & B500 & ToolTree & $-3.00$ & $[-8.00,1.00]$ & 100 \\
Game24 & B500 & LevinTS & $-4.00$ & $[-8.00,-1.00]$ & 100 \\
Blocksworld & B100 & UCT-MCTS & $+1.01$ & $[0.00,3.03]$ & 99 \\
Blocksworld & B100 & ToolChain* & $+2.02$ & $[0.00,5.05]$ & 99 \\
Blocksworld & B100 & LATS & $+0.00$ & $[0.00,0.00]$ & 99 \\
Blocksworld & B100 & ToolTree & $+9.09$ & $[2.02,16.16]$ & 99 \\
Blocksworld & B100 & LevinTS & $+3.03$ & $[0.00,7.07]$ & 99 \\
Blocksworld & B200 & UCT-MCTS & $+0.00$ & $[0.00,0.00]$ & 99 \\
Blocksworld & B200 & ToolChain* & $+0.00$ & $[0.00,0.00]$ & 99 \\
Blocksworld & B200 & LATS & $+0.00$ & $[0.00,0.00]$ & 99 \\
Blocksworld & B200 & ToolTree & $+9.09$ & $[2.02,16.16]$ & 99 \\
Blocksworld & B200 & LevinTS & $+2.02$ & $[0.00,5.05]$ & 99 \\
Blocksworld & B300 & UCT-MCTS & $+1.01$ & $[0.00,3.03]$ & 99 \\
Blocksworld & B300 & ToolChain* & $+0.00$ & $[0.00,0.00]$ & 99 \\
Blocksworld & B300 & LATS & $+0.00$ & $[0.00,0.00]$ & 99 \\
Blocksworld & B300 & ToolTree & $+9.09$ & $[2.02,16.16]$ & 99 \\
Blocksworld & B300 & LevinTS & $-3.03$ & $[-8.08,1.01]$ & 99 \\
Blocksworld & B500 & UCT-MCTS & $+3.03$ & $[0.00,7.07]$ & 99 \\
Blocksworld & B500 & ToolChain* & $+0.00$ & $[0.00,0.00]$ & 99 \\
Blocksworld & B500 & LATS & $-1.01$ & $[-3.03,0.00]$ & 99 \\
Blocksworld & B500 & ToolTree & $+9.09$ & $[2.02,16.16]$ & 99 \\
Blocksworld & B500 & LevinTS & $-5.05$ & $[-10.10,-1.01]$ & 99 \\
GTA & B100 & UCT-MCTS & $+0.21$ & $[-1.50,1.99]$ & 99 \\
GTA & B100 & ToolChain* & $+0.00$ & $[0.00,0.00]$ & 99 \\
GTA & B100 & LATS & $-0.32$ & $[-0.96,0.00]$ & 99 \\
GTA & B100 & ToolTree & $+7.05$ & $[3.66,11.01]$ & 99 \\
GTA & B100 & LevinTS & $-0.77$ & $[-1.99,0.00]$ & 99 \\
GTA & B200 & UCT-MCTS & $+0.20$ & $[-1.41,1.85]$ & 99 \\
GTA & B200 & ToolChain* & $+0.00$ & $[0.00,0.00]$ & 99 \\
GTA & B200 & LATS & $-0.31$ & $[-0.96,0.04]$ & 99 \\
GTA & B200 & ToolTree & $+7.05$ & $[3.64,11.00]$ & 99 \\
GTA & B200 & LevinTS & $-0.95$ & $[-2.21,0.00]$ & 99 \\
GTA & B300 & UCT-MCTS & $+0.61$ & $[-1.07,2.35]$ & 99 \\
GTA & B300 & ToolChain* & $+0.00$ & $[0.00,0.00]$ & 99 \\
GTA & B300 & LATS & $-0.31$ & $[-0.96,0.04]$ & 99 \\
GTA & B300 & ToolTree & $+7.05$ & $[3.65,11.01]$ & 99 \\
GTA & B300 & LevinTS & $-0.95$ & $[-2.21,0.00]$ & 99 \\
GTA & B500 & UCT-MCTS & $+0.03$ & $[-1.61,1.69]$ & 99 \\
GTA & B500 & ToolChain* & $+0.00$ & $[0.00,0.00]$ & 99 \\
GTA & B500 & LATS & $-0.31$ & $[-0.96,0.04]$ & 99 \\
GTA & B500 & ToolTree & $+7.05$ & $[3.64,11.01]$ & 99 \\
GTA & B500 & LevinTS & $-0.95$ & $[-2.21,0.00]$ & 99 \\
\end{longtable}
}

\clearpage
\section{Theoretical Boundaries for Biased Exploration}
\label{subsec:theory}

\paragraph{Standing assumption (restated).}
Assumption~\ref{ass:persistent_bias} requires
$|\mathbb E[Y_{m,r}\mid\mathcal F_{r-1}]-\mu_m|\le L$ for every active
candidate and evaluation, with a conditionally $\sigma^2$-sub-Gaussian centered
residual. The following statements and proofs use this same fixed-frontier model.

\subsection{Lemma~\ref{lem:concentration}: statement and proof}
\label{app:proof_concentration}

\paragraph{Statement.}
With probability at least $1-\delta$, simultaneously for every
$m\in\mathcal A_t$ and every $n\ge1$,
\[
|b_m(n)-\mu_m|\le u_{\rm stat}(n)+L=u_{\rm dist}(n),
\]
where
\[
u_{\rm stat}(n)=
\sqrt{\frac{2\sigma^2\ln(\pi^2n^2|\mathcal A_t|/(3\delta))}{n}}.
\]
Thus repeated scoring shrinks only the statistical radius; the additive bias
allowance $L$ remains.

\begin{proof}
For evaluation $r$, define the predictable conditional mean
\[
q_{m,r}:=\mathbb E[Y_{m,r}\mid\mathcal F_{r-1}],
\qquad
\beta_{m,r}:=q_{m,r}-\mu_m,
\]
and the centered residual
\[
\xi_{m,r}:=Y_{m,r}-q_{m,r}.
\]
Assumption~\ref{ass:persistent_bias} gives $|\beta_{m,r}|\le L$ pathwise, while
$\{\xi_{m,r}\}_{r\ge1}$ is a conditionally $\sigma^2$-sub-Gaussian
martingale-difference sequence. Hence
\begin{equation}
b_m(n)-\mu_m
=
\frac1n\sum_{r=1}^{n}\xi_{m,r}
+
\frac1n\sum_{r=1}^{n}\beta_{m,r}.
\label{eq:pathwise_bias_decomp}
\end{equation}
The second term is deterministically bounded:
\[
\left|
\frac1n\sum_{r=1}^{n}\beta_{m,r}
\right|
\le
\frac1n\sum_{r=1}^{n}|\beta_{m,r}|
\le L.
\]

For the martingale term, conditional sub-Gaussianity implies for every fixed
$m$ and $n$,
\[
\Pr\!\left(
\left|\sum_{r=1}^{n}\xi_{m,r}\right|
\ge n\,u_{\rm stat}(n)
\right)
\le
2\exp\!\left(
-\frac{n\,u_{\rm stat}(n)^2}{2\sigma^2}
\right)
=
\frac{6\delta}{\pi^2n^2|\mathcal A_t|}.
\]
Summing over $n\ge1$ and using
$\sum_{n\ge1}n^{-2}=\pi^2/6$ gives failure probability at most
$\delta/|\mathcal A_t|$ for one candidate over all inspected sample counts.
A second union bound over the finite frontier therefore yields, with probability
at least $1-\delta$,
\[
\left|
\frac1n\sum_{r=1}^{n}\xi_{m,r}
\right|
\le u_{\rm stat}(n)
\qquad
\forall m\in\mathcal A_t,\ \forall n\ge1.
\]
Combining this event with Eq.~\eqref{eq:pathwise_bias_decomp} proves
Eq.~\eqref{eq:main_concentration}. This argument is time-uniform over the
sample counts inspected at the fixed frontier; it does not require the sample
allocation across candidates to be uniform.
\end{proof}

\subsection{Theorem~\ref{thm:complexity}: statement and proof}
\label{app:proof_upper_bound}

\paragraph{Statement.}
On the simultaneous concentration event of Lemma~\ref{lem:concentration}, a
suboptimal candidate $m$ is safely eliminated whenever
\[
2u_{\rm stat}(n_m)+2u_{\rm stat}(n_{m^*})
<\Delta_m-4L-\varepsilon.
\]
Hence $\Delta_{\rm eff,m}=\Delta_m-4L$. If
$\Delta_{\rm eff,m}>\varepsilon$, the leading pairwise evaluation complexity is
\[
N_m=O\!\left(
\frac{\sigma^2[\log(|\mathcal A_t|/\delta)+
\log(\Delta_{\rm eff,m}^{-2})]}
{(\Delta_{\rm eff,m}-\varepsilon)^2}
\right).
\]

\begin{proof}
The interval rule eliminates a suboptimal candidate $m$ when
\begin{equation}
b_m(t)+u_{\rm dist}(n_m)
<
b_{m^*}(t)-u_{\rm dist}(n_{m^*})-\varepsilon.
\label{eq:appendix_elim_rule}
\end{equation}
Work on the simultaneous event of Lemma~\ref{lem:concentration}. The empirical
means can deviate toward one another by
\[
b_m(t)\le \mu_m+u_{\rm stat}(n_m)+L,
\qquad
b_{m^*}(t)\ge \mu^*-u_{\rm stat}(n_{m^*})-L.
\]
Substituting these worst-case directions into
Eq.~\eqref{eq:appendix_elim_rule}, a sufficient condition for elimination is
\begin{align*}
(\mu_m+u_{\rm stat}(n_m)+L)
+(u_{\rm stat}(n_m)+L)
&<
(\mu^*-u_{\rm stat}(n_{m^*})-L)\\
&\quad-(u_{\rm stat}(n_{m^*})+L)-\varepsilon .
\end{align*}
Using $\Delta_m=\mu^*-\mu_m$ gives
\begin{equation}
2u_{\rm stat}(n_m)+2u_{\rm stat}(n_{m^*})
<
\Delta_m-4L-\varepsilon,
\label{eq:appendix_upper_condition}
\end{equation}
which is Eq.~\eqref{eq:upper_condition}. This establishes the effective
certifiable gap $\Delta_{\rm eff,m}=\Delta_m-4L$ and proves the safe-elimination
part of the theorem.

The sample counts need not be balanced. Let
$n_{\min}=\min(n_m,n_{m^*})$. The radius
$u_{\rm stat}(n)=\sqrt{2\sigma^2\ln(C_1n^2)/n}$ is decreasing once $n$ is beyond
a constant-size prefix (equivalently, once $\ln(C_1n^2)>2$), and this prefix is
absorbed by the asymptotic bound below. Hence, in the relevant regime,
\[
u_{\rm stat}(n_m)\le u_{\rm stat}(n_{\min}),
\qquad
u_{\rm stat}(n_{m^*})\le u_{\rm stat}(n_{\min}),
\]
so the stronger condition
\begin{equation}
4u_{\rm stat}(n_{\min})
<
\Delta_m-4L-\varepsilon
\label{eq:appendix_nmin_condition}
\end{equation}
is sufficient. In particular, if
$\Delta_m\le4L+\varepsilon$, the right-hand side is non-positive and additional
repeated scoring alone cannot make this certificate fire.

To obtain the leading sample-complexity scaling, set
\[
C_1=\frac{\pi^2|\mathcal A_t|}{3\delta}.
\]
Equation~\eqref{eq:appendix_nmin_condition} requires
\[
4\sqrt{\frac{2\sigma^2\ln(C_1n_{\min}^2)}{n_{\min}}}
<
\Delta_m-4L-\varepsilon.
\]
Squaring and rearranging gives
\begin{equation}
\frac{n_{\min}}{\ln(C_1n_{\min}^2)}
>
\frac{32\sigma^2}{(\Delta_m-4L-\varepsilon)^2}
=:C_2.
\label{eq:appendix_lambert_start}
\end{equation}
The boundary equation can be inverted with the negative branch
$W_{-1}$ of the Lambert $W$ function. Using the standard small-argument expansion
$-W_{-1}(-x)=\log(1/x)+\log\log(1/x)+o(1)$ gives, up to universal constants and
lower-order iterated logarithms,
\[
n_{\min}
=
O\!\left(
\frac{\sigma^2[
\log(|\mathcal A_t|/\delta)+
\log(\Delta_{\rm eff,m}^{-2})]}
{(\Delta_{\rm eff,m}-\varepsilon)^2}
\right).
\]
This is the leading asymptotic scaling reported in
Eq.~\eqref{eq:sample_complexity_main}; it is not claimed as an exact finite-sample
equality \citep{kaufmann2016complexity}. The argument also shows why the
least-sampled member of the compared pair controls the frontier-wise certificate.
\end{proof}

\subsection{Theorem~\ref{thm:lower_bound}: statement and proof}
\label{app:proof_lower_boundary}

\paragraph{Statement.}
Consider the two-child Gaussian subclass with common variance $\sigma^2$, true
gap $\Delta>\varepsilon$, and $\delta<1/2$. If
$\varepsilon<\Delta\le2L$, there are two admissible instances with opposite
optimal children but identical adaptive observation laws, so uniform
$(\varepsilon,\delta)$-PAC identification is impossible. If $\Delta>2L$, any
uniformly $(\varepsilon,\delta)$-PAC learner obeys, on a suitable admissible pair,
\[
\mathbb E[\tau]\ge
\frac{2\sigma^2\operatorname{kl}(1-\delta,\delta)}{(\Delta-2L)^2}
=\Omega\!\left(\frac{\sigma^2\log(1/\delta)}{(\Delta-2L)^2}\right).
\]

\begin{proof}
For the impossibility statement, consider two children. In world $P$ let the
true-value vector be
\[
(c+\Delta/2,\ c-\Delta/2),
\]
and in world $Q$ swap the two coordinates. If $\Delta\le2L$, choose bounded biases
$(-\Delta/2,+\Delta/2)$ in world $P$ and the swapped biases in world $Q$.
Every observation from either queried child then has mean $c$ in both worlds.
Therefore, under any predictable adaptive sampling policy, the complete
observation-history laws are identical. Yet when $\Delta>\varepsilon$, the sets
of valid $\varepsilon$-optimal outputs are disjoint because the optimal child is
different in the two worlds. No procedure can return the correct
$\varepsilon$-optimal child with probability at least $1-\delta$ in both worlds
for $\delta<1/2$. This proves the first part.

Now assume $\Delta>2L$. Construct two admissible Gaussian worlds with variance
$\sigma^2$ and opposite optimal children, using saturated opposing biases
$\pm L$. The observable mean separation relevant to distinguishing the worlds is
\[
g=\Delta-2L>0.
\]
Let $A$ be the event that the learner returns child 1. Uniform
$(\varepsilon,\delta)$-PAC correctness implies
\[
P(A)\ge1-\delta,
\qquad
Q(A)\le\delta.
\]
By data processing for KL divergence,
\[
\mathrm{KL}(P^H\Vert Q^H)
\ge
\operatorname{kl}(1-\delta,\delta),
\]
where $P^H,Q^H$ denote the full adaptive transcript laws. The adaptive KL chain
rule decomposes the history-level divergence into expected sample counts times
per-sample divergences. For this two-world Gaussian construction, each informative
query contributes at most $g^2/(2\sigma^2)$, hence
\[
\mathrm{KL}(P^H\Vert Q^H)
\le
\mathbb E_P[\tau]\frac{g^2}{2\sigma^2}.
\]
Combining the last two displays yields
\[
\mathbb E_P[\tau]
\ge
\frac{2\sigma^2\,\operatorname{kl}(1-\delta,\delta)}
{(\Delta-2L)^2},
\]
which is Eq.~\eqref{eq:two_point_lower_bound_main}. Since
$\operatorname{kl}(1-\delta,\delta)=\Omega(\log(1/\delta))$ for
$\delta<1/2$, the stated asymptotic form follows.
\end{proof}

\subsection{Proposition~\ref{prop:additive_lower_bound}: statement and proof}
\label{app:proof_additive_lower_bound}

\paragraph{Statement.}
For a finite base frontier, let
\[
\mathcal I=\{m\ne m^*: \Delta_m>\varepsilon,\;\Delta_m+\varepsilon>2L\}.
\]
If, for every $m\in\mathcal I$, the parameter space contains alternatives that
raise candidate $m$ above the original optimum by $\varepsilon+\gamma$ for
arbitrarily small $\gamma>0$, then in the Gaussian bounded-bias subclass every
learner uniformly $(\varepsilon,\delta)$-PAC over the base and these alternatives
satisfies
\[
\mathbb E[\tau]\ge
\sum_{m\in\mathcal I}
\frac{2\sigma^2\operatorname{kl}(\delta,1-\delta)}
{(\Delta_m+\varepsilon-2L)^2}.
\]
This separated-alternative construction asserts no finite bound when
$\Delta_m+\varepsilon\le2L$.

\begin{proof}
Fix a base instance and one $m\in\mathcal I$. Construct an alternative instance
that leaves every candidate except $m$ unchanged and raises the true value of
candidate $m$ above the original optimum by $\varepsilon+\gamma$, where
$\gamma>0$ can be arbitrarily small. Assign bias $+L$ to $m$ in the base
instance and bias $-L$ to $m$ in the alternative. The observation-mean shift on candidate $m$ is
\[
\Delta_m+\varepsilon+\gamma-2L,
\]
which is strictly positive by the definition of $\mathcal I$ for sufficiently
small $\gamma$.

Let $A_m$ be the event that the learner returns candidate $m$. Uniform
$(\varepsilon,\delta)$-PAC correctness implies that under the base instance
$P(A_m)\le\delta$, while under the $m$-alternative
$Q_m(A_m)\ge1-\delta$. By data processing and the adaptive KL chain rule,
\[
\mathbb E_P[N_m(\tau)]
\frac{(\Delta_m+\varepsilon+\gamma-2L)^2}{2\sigma^2}
\ge
\operatorname{kl}(\delta,1-\delta).
\]
Therefore
\[
\mathbb E_P[N_m(\tau)]
\ge
\frac{2\sigma^2\,\operatorname{kl}(\delta,1-\delta)}
{(\Delta_m+\varepsilon+\gamma-2L)^2}.
\]
Letting $\gamma\downarrow0$ gives the per-candidate necessary sample count.
Because each alternative changes only its corresponding candidate, the inequalities
hold simultaneously for every $m\in\mathcal I$ under the same base instance.
Summing them and using
$\tau=\sum_jN_j(\tau)$ yields
\[
\mathbb E_P[\tau]
\ge
\sum_{m\in\mathcal I}
\frac{2\sigma^2\,\operatorname{kl}(\delta,1-\delta)}
{(\Delta_m+\varepsilon-2L)^2},
\]
which proves Proposition~\ref{prop:additive_lower_bound}. When
$\Delta_m+\varepsilon\le2L$, this particular separated-alternative construction
does not provide a positive observable shift, so no finite bound from this
construction is asserted.
\end{proof}

\subsection{Conditional graceful degradation under severe bias}
\label{sec:appendix_corollary}

An unconditional $4L+\varepsilon$ return-quality guarantee does not follow
from Assumption~\ref{ass:persistent_bias} alone: the remaining statistical radius
must also be controlled. The following fixed-frontier corollary states the
corresponding guarantee supported by Lemma~\ref{lem:concentration}.

\begin{corollary}[Graceful degradation for empirical-best selection]
\label{cor:degradation}
Work on the concentration event of Lemma~\ref{lem:concentration}, and let
\[
\widehat m\in\arg\max_{m\in\mathcal A_t} b_m(n_m)
\]
be the empirical-best candidate at a fixed frontier. Then
\begin{equation}
\mu^*-\mu_{\widehat m}
\le
u_{\rm stat}(n_{m^*})
+
u_{\rm stat}(n_{\widehat m})
+
2L.
\label{eq:graceful_general}
\end{equation}
Consequently, if
$u_{\rm stat}(n_{m^*})\le L+\varepsilon/2$ and
$u_{\rm stat}(n_{\widehat m})\le L+\varepsilon/2$, then
\[
\mu^*-\mu_{\widehat m}\le4L+\varepsilon.
\]
\end{corollary}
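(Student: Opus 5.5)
The plan is a sandwich argument that uses only the concentration event of Lemma~\ref{lem:concentration}. Since $\widehat m$ is an empirical maximizer, we have $b_{\widehat m}(n_{\widehat m})\ge b_{m^*}(n_{m^*})$. The task is to bound $\mu^*$ from above in terms of $b_{m^*}$ and $\mu_{\widehat m}$ from below in terms of $b_{\widehat m}$, and then chain the inequalities.

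First step: on the event of Lemma~\ref{lem:concentration}, apply the two-sided bound to $m^*$ at its count $n_{m^*}$, which gives $\mu^*\le b_{m^*}(n_{m^*})+u_{\rm stat}(n_{m^*})+L$. Apply it also to $\widehat m$ at count $n_{\widehat m}$, which gives $\mu_{\widehat m}\ge b_{\widehat m}(n_{\widehat m})-u_{\rm stat}(n_{\widehat m})-L$. Subtracting these and using $b_{m^*}(n_{m^*})-b_{\widehat m}(n_{\widehat m})\le 0$ yields Eq.~\eqref{eq:graceful_general}, with the two bias allowances summing to $2L$.

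The one point needing care is that $\widehat m$ is data-dependent, as are the counts $n_m$. A fixed-$m$, fixed-$n$ bound therefore cannot be invoked directly. This is exactly why the lemma is stated simultaneously for every $m\in\mathcal A_t$ and every $n\ge1$. On that single event, the bound holds for whatever random pair $(\widehat m,n_{\widehat m})$ is realized, and no further union bound is required. I would state this explicitly; it is the only conceptual subtlety.

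Second step: the consequence is pure substitution. If both statistical radii are at most $L+\varepsilon/2$, the right-hand side of Eq.~\eqref{eq:graceful_general} is at most $2(L+\varepsilon/2)+2L=4L+\varepsilon$. I would close by remarking on the contrast: the bias term $2L$ survives however many samples are taken, while the radius conditions can be met by repeated scoring. This is the graceful-degradation reading, and it matches the $2L$ boundary of Theorem~\ref{thm:lower_bound} up to the statistical terms.
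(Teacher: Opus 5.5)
Your proposal is correct and matches the paper's proof essentially step for step: bound $\mu^*$ above via concentration at $m^*$, use $b_{m^*}(n_{m^*})\le b_{\widehat m}(n_{\widehat m})$, bound $\mu_{\widehat m}$ below via concentration at $\widehat m$, then substitute the radius conditions to get $4L+\varepsilon$. You also say explicitly that the simultaneity over all $m$ and $n$ in Lemma~\ref{lem:concentration} is what lets you apply it to the data-dependent pair $(\widehat m,n_{\widehat m})$, which the paper leaves implicit.
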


\begin{proof}
On the concentration event,
\[
\mu^*
\le
b_{m^*}(n_{m^*})+u_{\rm stat}(n_{m^*})+L.
\]
Because $\widehat m$ maximizes the empirical mean,
$b_{m^*}(n_{m^*})\le b_{\widehat m}(n_{\widehat m})$, while again by
concentration,
\[
b_{\widehat m}(n_{\widehat m})
\le
\mu_{\widehat m}+u_{\rm stat}(n_{\widehat m})+L.
\]
Combining the two inequalities gives
Eq.~\eqref{eq:graceful_general}. The stated $4L+\varepsilon$ consequence follows
by substituting the two radius bounds.
\end{proof}

\paragraph{Scope of the severe-bias corollary.}
Corollary~\ref{cor:degradation} is deliberately conditional and fixed-frontier.
It does not claim that finite repeated scoring automatically reaches the stated
radius condition, nor does it extend the Native-trace conformal guarantee to
PAC-CF-induced frontiers. It is included only to retain the valid part of the
earlier severe-bias intuition without overstating what the current assumptions prove.

\end{document}